\definecolor{stroke1}{HTML}{2574A9}
\newtheorem{theorem}{Theorem}
\newtheorem{lemma}{Lemma}
\newtheorem{corollary}{Corollary}
\newtheorem{example}{Example}
\newcommand*{\N}{\mathds{N}}
\newcommand*{\R}{\mathds{R}}
\newcommand*{\OM}{\textsc{OneMax}\xspace}
\newcommand*{\Jump}{\textsc{Jump}\xspace}
\newcommand*{\LO}{\textsc{LeadingOnes}\xspace}
\newcommand*{\DLB}{\textsc{DeceptiveLeadingBlocks}\xspace}
\newcommand*{\dlb}{\textsc{DLB}\xspace}
\newcommand*{\DB}{\textsc{DeceptiveBlock}\xspace}
\newcommand*{\db}{\textsc{DB}\xspace}
\newcommand*{\PREFIX}{\textsc{Prefix}\xspace}
\newcommand*{\prefix}{\textsc{Prefix}\xspace}
\newcommand*{\E}{\mathrm{E}}
\newcommand{\oea}{\mbox{$(1 + 1)$~EA}\xspace}
\begin{document}

\ecjHeader{x}{x}{xxx-xxx}{2020}{UMDA Copes Well With Deception and Epistasis}{B. Doerr and M. S. Krejca}
\title{\bf The Univariate Marginal Distribution Algorithm Copes Well With Deception and Epistasis}  

\author{\name{\bf Benjamin Doerr} \hfill \addr{doerr@lix.polytechnique.fr}\\ 
    \addr{Laboratoire d'Informatique (LIX), CNRS, \'Ecole Polytechnique, Institut Polytechnique de Paris, Palaiseau, France}
    \AND
    \name{\bf Martin~S. Krejca} \hfill \addr{martin.krejca@hpi.de}\\
    \addr{Hasso Plattner Institute, University of Potsdam, Potsdam, Germany}
}

\maketitle

\begin{abstract}
    In their recent work, Lehre and Nguyen (FOGA 2019)
    show that the univariate marginal distribution algorithm (UMDA) needs time exponential in the parent populations size to optimize the \DLB (\dlb) problem. They conclude from this result that univariate EDAs have difficulties with deception and epistasis.
    
    In this work, we show that this negative finding is caused by an unfortunate choice of the parameters of the UMDA. When the population sizes are chosen large enough to prevent genetic drift, then the UMDA optimizes the DLB problem with high probability with at most $\lambda(\frac{n}{2} + 2 e \ln n)$ fitness evaluations.
    Since an offspring population size~$\lambda$ of order $n \log n$ can prevent genetic drift, the UMDA can solve the \dlb problem with $O(n^2 \log n)$ fitness evaluations. In contrast, for classic evolutionary algorithms no better run time guarantee than $O(n^3)$ is known (which we prove to be tight for the \oea), so our result rather suggests that the UMDA can cope well with deception and epistatis.
    
    From a broader perspective, our result shows that the UMDA can cope better with local optima than evolutionary algorithms; such a result was previously known only for the compact genetic algorithm. Together with the lower bound of Lehre and Nguyen, our result for the first time rigorously proves that running EDAs in the regime with genetic drift can lead to drastic performance losses.
\end{abstract}

\begin{keywords}

Estimation-of-distribution algorithm,
univariate marginal distribution algorithm,
run time analysis,
epistasis,
theory.

\end{keywords}

\section{Introduction}
\label{sec:introduction}

Estimation-of-distribution algorithms (EDAs) are randomized search heuristics that evolve a probabilistic model of the search space in an iterative manner. Typically starting with the uniform distribution, an EDA takes samples from its current model and then adjusts it such that better solutions have a higher probability of being generated in the next iteration. This method of refinement leads to gradually better solutions and performs well on many practical problems, often outperforming competing approaches~\citep{PelikanHL15SurveyOnEDAs}.

Theoretical analyses of EDAs also often suggest an advantage of EDAs when compared to evolutionary algorithms (EAs); for an in-depth survey of run time results for EDAs, please refer to the article by \cite{KrejcaW18EDAoverview}. With respect to simple unimodal functions, EDAs seem to be comparable to EAs. For example, \cite{SudholtW19cGAandACOonOneMax} proved that the two EDAs cGA and $2$-MMAS$_\textrm{ib}$ have an expected run time of $\Theta(n \log n)$ on the standard theory benchmark function \OM (assuming optimal parameter settings; $n$ being the problem size), which is a run time that many EAs share. The same is true for the EDA UMDA, as shown by the results of \cite{KrejcaW20UMDAlowerBoundOneMax}, \cite{LehreN17UMDAonOneMax}, and \cite{Witt19}. For the benchmark function \LO, \cite{DangL15UMDAonLO} proved an expected run time of $O(n^2)$ for the UMDA when setting the parameters right, which is, again, a common run time bound for EAs on this function. One result suggesting that EDAs can outperform EAs on unimodal functions was given by \cite{DoerrK18sigcGA}. They proposed an EDA called sig-cGA, which has an expected run time of $O(n \log n)$ on both \OM and \LO~-- a performance not known for any classic EA or EDA.

For the class of all linear functions, EDAs perform slightly worse than EAs. The classical $(1 + 1)$ evolutionary algorithm optimizes all linear functions in time $O(n \log n)$ in expectation~\citep{DrosteJW02OnePlusOneEAonJump}. In contrast, the conjecture of \cite{Droste06cGAonLinearFunctions} that the cGA does not perform equally well on all linear functions was recently proven by \cite{Witt18DominoConvergence}, who showed that the cGA has an $\Omega(n^2)$ expected run time on the binary-value function. We note that the binary-value function was found harder also for classical EAs. While the $(1 + \lambda)$ evolutionary algorithm optimizes \OM with $\Theta(n \lambda \log \log \lambda / \log \lambda)$ fitness evaluations, it takes $\Theta(n \lambda)$ fitness evaluations for the binary-value functions~\citep{DoerrK15LambdaEAonLinearFunctions}.

For the mulimodal $\Jump_k$ benchmark function, which has local optima with a Hamming distance of~$k$ away from the global optimum, EDAs seem to drastically outperform EAs. \cite{HasenoehrlS18cGAonJump} recently proved that the cGA only has a run time of $\exp\!\big(O(k + \log n)\big)$ with high probability. \cite{Doerr19cGAonJump} proved that the cGA with high probability has a run time of $O(n \log n)$ on $\Jump_k$ if $k < \tfrac{1}{20} \ln n$, meaning that the cGA is unfazed by the gap of~$k$ separating the local from the global optimum. In contrast, common EAs have run times such as $\Theta(n^k)$~(\oea, \citealp{DrosteJW02OnePlusOneEAonJump}), $\Omega(n^k)$~(${(\mu,\lambda)}$~EA, \citealp{Doerr20gecco}),  and $\Theta(n^{k - 1})$~(${(\mu+1)}$~GA, \citealp{DangFKKLOSS18EAonJumpWithCrossover}), and only go down to smaller run times such as $O(n\log n + kn + 4^k)$ by using crossover in combination with diversity mechanisms like island models~\citep{DangFKKLOSS16EAonJumpDiversityMechanisms}. 

Another result in favor of EDAs was given by \cite{ChenLTY09UMDAonSubstring}, who introduced the \textsc{SubString} function and proved that the UMDA optimizes it in polynomial time, whereas the $(1 + 1)$ evolutionary algorithm has an exponential run time, both with high probability. In the \textsc{SubString} function, only substrings of length $\alpha n$, for $\alpha \in (0, 1)$, of the global optimum are relevant to the fitness of a solution, and these substrings provide a gradient to the optimum. In the process, the $(1 + 1)$ evolutionary algorithm loses bits that are not relevant anymore for following the gradient (but relevant for the optimum). The UMDA fixes its model for correct positions while it is following the gradient and thus does not lose these bits.

The first, and so far only, result to suggest that EDAs can be drastically worse than EAs was recently stated by \cite{LehreN19UMDAonDLB} via the \DLB function (\dlb for short), which they introduce and which consists of blocks of size~$2$ that need to be solved sequentially. Each block is deceptive in the following sense: the values $10$ and $01$ have the worst fitness among all four possible values of a block. The value $00$ has the second best fitness, and the value $11$ has the best fitness. That is, the Hamming distance to the optimal block value~$11$ is not monotone with respect to increasing fitness, and the value~$00$ represents a local optimum for a block. This is why \dlb is considered deceptive.

\cite{LehreN19UMDAonDLB} prove that many common EAs optimize \dlb within $O(n^3)$ fitness evaluations in expectation, whereas the UMDA has a run time of $e^{\Omega(\mu)}$ (where~$\mu$ is an algorithm-specific parameter that often is chosen as a small power of $n$) for a large regime of parameters. Only for extreme parameter values $\lambda = \Omega(\mu^2)$ (where~$\lambda$ is another algorithm-specific parameter that is often chosen in a similar order of magnitude to~$\mu$), the authors prove an expected run time of $O(n^3)$ also for the UMDA.

In this paper, we prove that the UMDA is, in fact, able to optimize \dlb in time $O(n^2 \log n)$ with high probability if its parameters are chosen more carefully (\Cref{thm:UMDA_on_LO}). Note that our result is better than any of the run times proven in the paper by \cite{LehreN19UMDAonDLB}. We achieve this run time by choosing the parameters of the UMDA such that its model is unlikely to degenerate during the run time (\Cref{lem:frequencies_do_not_drop_too_low}). Here by \emph{degenerate} we mean that the sampling frequencies approach the boundary values~$0$ and~$1$ where this is not justified by the objective function. This leads to a probabilistic model that is strongly concentrated around a single search point. This effect is often called \emph{genetic drift}~\citep{SudholtW19cGAandACOonOneMax}. While it appears natural to choose the parameters of an EDA as to prevent genetic drift, it also has been proven that genetic drift can lead to a complicated run time landscape and inferior performance (see \cite{LenglerSW18cGAmediumStepSizes} for the cGA).

In contrast to our setting, for their exponential lower bound, \cite{LehreN19UMDAonDLB} use parameters that lead to genetic drift. Once the probabilistic model is sufficiently degenerated, that is, the frequencies of a block in the first half are $O(1/n)$, the progress of the UMDA is so slow that even to leave the local optima of \dlb (that have a better search point in Hamming distance two only), the EDA takes time exponential in $\mu$.

Since the UMDA shows a good performance in the (more natural) regime without genetic drift and was shown inferior only in the regime with genetic drift, we disagree with the statement of \cite{LehreN19UMDAonDLB} that there are ``inherent limitations of univariate EDAs against deception and epistasis''.

In addition to the improved run time, we derive our result using only tools commonly used in the analysis of EDAs and EAs, whereas the proof of the polynomial run time of $O(n^3)$ for the UMDA with uncommon parameter settings~\citep{LehreN19UMDAonDLB} uses the level-based population method~\citep{Lehre11,DangL16algo,CorusDEL18,DoerrK19}, which is an advanced tool that can be hard to use. We are thus optimistic that our analysis method can be useful also in other run time analyses of EDAs.

We recall that the previous work~\citep{LehreN19UMDAonDLB} only proved upper bounds for the run time of EAs on \dlb, namely of order $O(n^3)$ unless overly large population sizes are used. To support our claim that the UMDA shows a better performance on \dlb than EAs, we rigorously prove a lower bound of $\Omega(n^3)$ for the run time of the \oea on \dlb (\Cref{thm:oneOneEAonDLB}). More precisely, we determine a precise expression for the expected run time of this algorithm on \dlb, which is asymptotically equal to $(1 \pm o(1)) \frac{e-1}{4} n^3$. In addition, we prove that a run time of $\Omega(n^3)$ holds with overwhelming probability (\Cref{thm:eaWHPBound}).

Last, we complement our theoretical result with an empirical comparison of the UMDA to various other evolutionary algorithms. The outcome of these experiments suggests that the UMDA outperforms the competing approaches while also having a smaller variance (\Cref{fig:experiments}). Further, we compare the UMDA to the EDA MIMIC~\citep{BonetIV96MIMIC}, which is similar to the UMDA but uses a more sophisticated probabilistic model that is capable of capturing dependencies among bit positions. Our comparison shows that, for the same parameter regime of both algorithms, the UMDA and the MIMIC behave asymptotically equally, with the MIMIC being slightly faster. This, again, highlights that the UMDA is well suited to optimize \dlb.

The remainder of this paper is structured as follows: in \Cref{sec:preliminaries}, we introduce our notation, formally define \dlb and the UMDA, and we state the tools we use in our analysis. \Cref{sec:runTimeResult} contains our main result (\Cref{thm:UMDA_on_LO}) and discusses its proof informally before stating the different lemmas used to prove it. In \Cref{sec:lb}, we conduct a tight run time analysis of the \oea on \dlb. In \Cref{sec:experiments}, we discuss our empirical results. Last, we conclude this paper in \Cref{sec:conclusion}.

This paper extends our conference version~\citep{DoerrK20UMDAonDLB} in two major ways: (i) we prove lower bounds of $\Omega(n^3)$ for the run time of the \oea on \dlb, which hold in expectation (\Cref{thm:oneOneEAonDLB}) and with overwhelming probability (\Cref{thm:eaWHPBound}). (ii) Our empirical analysis contains more EAs, the MIMIC (\Cref{fig:experiments}), and also considers the impact of the UMDA's population size (\Cref{fig:variableMu}).

\section{Preliminaries}
\label{sec:preliminaries}

We are concerned with the run time analysis of algorithms optimizing pseudo-Boolean functions, that is, functions $f\colon \{0, 1\}^n \to \R$, where $n \in \N^+$ denotes the dimension of the problem. Given a pseudo-Boolean function~$f$ and a bit string~$x$, we refer to~$f$ as a \emph{fitness function,} to~$x$ as an \emph{individual,} and to $f(x)$ as the \emph{fitness of~$x$.}

For $n_1, n_2 \in \N \coloneqq \{0, 1, 2, \ldots\}$, we define $[n_1 .. n_2] = [n_1, n_2] \cap \N$, and for an $n \in \N$, we define $[n] = [1 .. n]$. From now on, if not stated otherwise, the variable~$n$ always denotes the problem size. For a vector~$x$ of length~$n$, we denote its component at index $i \in [n]$ by $x_i$ and, for and index set $I \subseteq [n]$, we denote the subvector of length~$|I|$ consisting only of the components at indices in~$I$ by $x_I$. Further, let $|x|_1$ denote the number of~$1$s of~$x$ and $|x|_0$ its number of~$0$s.

\subsubsection*{\DLB.}

The pseudo-Boolean function \DLB (abbreviated as \dlb) was introduced by \cite{LehreN19UMDAonDLB} as a deceptive version of the well known benchmark function \LO. In \dlb, an individual~$x$ of length~$n$ is divided into blocks of equal size~$2$. Each block consists of a trap, where the fitness of each block is determined by the number of~$0$s (minus $1$), except that a block of all $1$s has the best fitness of~$2$. The overall fitness of~$x$ is then determined by the longest prefix of blocks with fitness~$2$ plus the fitness of the following block.
Note that in order for the chunking of \dlb to make sense, it needs to hold that~$2$ divides~$n$. In the following, we always assume this implicitly.

We now provide a formal definition of \dlb. To this end, we first introduce the function $\DB\colon \{0, 1\}^2 \to [0 .. 2]$ (abbreviated as \db), which determines the fitness of a block (of size~$2$). For all $x \in \{0, 1\}^2$, we have
\begin{align*}
    \db(x) =
    \begin{cases}
        2 & \textrm{if } |x|_1 = 2,\\
        |x|_0 - 1 & \textrm{else.}
    \end{cases}
\end{align*}
Further, we define the function $\PREFIX\colon \{0, 1\}^n \to [0 .. n]$, which determines the longest prefix of~$x$ with blocks of fitness~$2$. For a logic formula~$P$, let $[P]$ denote the Iverson bracket, that is,~$[P]$ is~$1$ if~$P$ is true and~$0$ otherwise. We define, for all $x \in \{0, 1\}^n$,
\begin{align*}
    \PREFIX(x) = \sum_{i = 1}^{n/2} \big[\forall j \leq i\colon \db(x_{\{2i - 1, 2i\}}) = 2\big].
\end{align*}

\dlb is now defined as follows for all $x \in \{0, 1\}^n$:
\begin{align*}
    \dlb(x) =
    \begin{cases}
        n & \textrm{if } \PREFIX(x) = n,\\
        \sum_{i = 1}^{\PREFIX(x) + 1} \db(x_{\{2i - 1, 2i\}}) & \textrm{else.}
    \end{cases}
\end{align*}

\subsubsection*{The univariate marginal distribution algorithm.}

Our algorithm of interest is the UMDA (\citealp{MuehlenbeinP96UMDA}; \Cref{alg:UMDA}) with parameters $\mu, \lambda \in \N^+$, $\mu \leq \lambda$. It maintains a vector~$p$ \emph{(frequency vector)} of probabilities \emph{(frequencies)} of length~$n$ as its probabilistic model. This vector is used to sample an individual $x \in \{0, 1\}^n$, which we denote as $x \sim \textrm{sample}(p)$, such that, for all $y \in \{0, 1\}^n$,
\[
\Pr[x = y] = \prod_{\substack{i = 1:\\y_i = 1}}^{n} p_i \prod_{\substack{i = 1:\\y_i = 0}}^{n} (1 - p_i).
\]
The UMDA updates this vector iteratively in the following way: first, $\lambda$ individuals are sampled. Then, among these~$\lambda$ individuals, a subset of~$\mu$ with the highest fitness is chosen (breaking ties uniformly at random), and, for each index $i \in [n]$, the frequency~$p_i$ is set to the relative number of~$1$s at position~$i$ among the~$\mu$ best individuals. Last, if a frequency~$p_i$ is below~$\frac{1}{n}$, it is increased to~$\frac{1}{n}$, and, analogously, frequencies above $1 - \frac{1}{n}$ are set to $1 - \frac{1}{n}$. Capping into the interval $[\frac{1}{n}, 1 - \frac{1}{n}]$ circumvents frequencies from being stuck at the extremal values~$0$ or~$1$. Last, we denote the frequency vector of iteration $t \in \N$ with $p^{(t)}$.

\begin{algorithm}
    \caption{\label{alg:UMDA} The UMDA~\citep{MuehlenbeinP96UMDA} with parameters~$\mu$ and~$\lambda$, $\mu \leq \lambda$, maximizing a fitness function $f\colon \{0, 1\}^n \to \R$ with $n \geq 2$}
    
    $t \gets 0$\;
    $p^{(t)} \gets (\tfrac{1}{2})_{i \in [n]}$ \tcp*{vector of length~$n$ with all entries being~$1/2$}
    \Repeat( \texttt{// iteration}~$t$)
    {\emph{termination criterion met}}
    {
        \lFor{$i \in [\lambda]$}
        {
            $x^{(i)} \sim \mathrm{sample}\!\left(p^{(t)}\right)$%
        }
        let $y^{(1)}, \ldots, y^{(\mu)}$ denote the~$\mu$ best individuals out of $x^{(1)}, \ldots, x^{(\lambda)}$ (breaking ties uniformly at random)\;
        \lFor{$i \in [n]$}
        {
            $p^{(t + 1)}_i \gets \frac{1}{\mu} \sum_{j = 1}^{\mu} y^{(j)}_i$%
        }
        restrict~$p^{(t + 1)}$ to the interval $[\tfrac{1}{n}, 1 - \tfrac{1}{n}]$\;
        $t \gets t + 1$\;
    }
\end{algorithm}

\subsubsection*{Run time analysis.}

When analyzing the run time of the UMDA optimizing a fitness function~$f$, we are interested in the number~$T$ of fitness function evaluations until an optimum of~$f$ is sampled for the first time. Since the UMDA is a randomized algorithm, this run time $T$ is a random variable. Note that the run time of the UMDA is at most~$\lambda$ times the number of iterations until an optimum is sampled for the first time, and it is at least~$\lambda$ times this number minus $(\lambda - 1)$.

In the area of run time analysis of randomized search heuristics, it is common to give bounds for the expected value of the run time of the algorithm under investigation. This is uncritical when the run time is concentrated around its expectation, as often observed for classical evolutionary algorithms. For EDAs, it has been argued, among others by~\cite{Doerr19cGAonJump}, that it is preferable to give bounds that hold with high probability. This is what we shall aim at in this work as well. Of course, it would be even better to give estimates in a distributional sense, e.g., via stochastic domination by another distribution, as argued for by~\cite{Doerr19tcs}, but this appears to be difficult for EDAs, among others, because of the very different behavior in the regimes with and without strong genetic drift. 

\subsubsection*{Probabilistic tools.}

We use the following results in our analysis. In order to prove statements on random variables that hold with high probability, we use the following commonly known Chernoff bound.

\begin{theorem}[Chernoff bound;~{\citealp[Theorem~$10.5$]{Doerr20bookchapter}, \citealp{Hoeffding63ChernoffBound}}]
    \label{thm:chernoff}
    Let $k \in \N$, $\delta \in [0, 1]$, and let~$X$ be the sum of~$k$ independent random variables, each taking values in $[0, 1]$. Then
    \begin{align*}
        \Pr\!\big[X \leq (1 - \delta) \E[X]\big] \leq \exp\!\left(- \frac{\delta^2 \E[X]}{2}\right).
    \end{align*}
\end{theorem}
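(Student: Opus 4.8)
The plan is to use the exponential-moment (Chernoff) method applied to the lower tail. Write $m \coloneqq \E[X]$ and recall $X = \sum_{i=1}^{k} X_i$ with the $X_i$ independent and each $X_i \in [0,1]$. For any parameter $t > 0$, the event $\{X \le (1-\delta)m\}$ coincides with $\{e^{-tX} \ge e^{-t(1-\delta)m}\}$, since $z \mapsto e^{-tz}$ is decreasing. Applying Markov's inequality to the nonnegative random variable $e^{-tX}$ and then using independence to factorize the moment-generating function, I would obtain
\[
\Pr[X \le (1-\delta)m] \le e^{t(1-\delta)m}\,\E\!\big[e^{-tX}\big] = e^{t(1-\delta)m} \prod_{i=1}^{k} \E\!\big[e^{-tX_i}\big].
\]

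To bound each factor, I would exploit that $X_i$ takes values in $[0,1]$ and that $z \mapsto e^{-tz}$ is convex: on $[0,1]$ the function lies below the chord joining $(0,1)$ and $(1, e^{-t})$, so $e^{-tX_i} \le 1 - X_i(1 - e^{-t})$ holds pointwise. Taking expectations and using $1 + z \le e^z$ gives $\E[e^{-tX_i}] \le 1 - \E[X_i](1-e^{-t}) \le \exp(-\E[X_i](1-e^{-t}))$. Multiplying these bounds and recalling $\sum_i \E[X_i] = m$, the product collapses to $\exp(-m(1-e^{-t}))$, and hence
\[
\Pr[X \le (1-\delta)m] \le \exp\!\big(m\,[\,t(1-\delta) - (1 - e^{-t})\,]\big),
\]
which is valid for every $t > 0$.

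It then remains to choose $t$ and simplify. Minimizing the bracketed exponent over $t > 0$ (differentiating and setting $e^{-t} = 1-\delta$, i.e.\ $t = \ln\tfrac{1}{1-\delta}$, which is admissible for $\delta \in [0,1)$) yields the sharp bound $\Pr[X \le (1-\delta)m] \le \exp(-m\,((1-\delta)\ln(1-\delta) + \delta))$. To reach the cleaner form in the statement I would finally verify the elementary inequality $(1-\delta)\ln(1-\delta) + \delta \ge \delta^2/2$ for all $\delta \in [0,1]$; this follows because the difference of the two sides has vanishing value and vanishing derivative at $\delta = 0$ and nonnegative second derivative $\delta/(1-\delta)$, hence is itself nonnegative. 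The boundary case $\delta = 1$ can be treated by letting $t \to \infty$ in the displayed bound, which gives $\exp(-m) \le \exp(-m/2)$.

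I expect the only genuinely delicate points to be the per-variable estimate for general $[0,1]$-valued (rather than Bernoulli) summands, where the convexity/chord argument is essential and replaces the exact computation available in the Bernoulli case, and the final analytic reduction to the $\delta^2/2$ exponent; the intermediate Chernoff optimization is routine. Since this is a standard textbook bound, in practice I would simply cite it, but the above is the self-contained derivation.
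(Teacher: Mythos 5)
The paper does not prove this theorem at all: it is stated as a known result, with the proof delegated to the cited sources (Doerr's 2020 book chapter, Theorem~10.5, and Hoeffding~1963). So there is no in-paper argument to compare against, and your closing remark that in practice one would simply cite the bound is exactly what the paper does. Your self-contained derivation is correct and complete, and it is essentially the classical proof underlying the cited references: the Markov/exponential-moment step applied to $e^{-tX}$, the chord bound $e^{-tX_i} \le 1 - X_i(1-e^{-t})$ (which is precisely the point where general $[0,1]$-valued summands are reduced to the Bernoulli-type estimate), the factorization via independence, the optimal choice $e^{-t} = 1-\delta$ yielding the exponent $(1-\delta)\ln(1-\delta)+\delta$, and the elementary inequality $(1-\delta)\ln(1-\delta)+\delta \ge \delta^2/2$, which your second-derivative argument (value and first derivative vanish at $\delta = 0$, second derivative $\delta/(1-\delta) \ge 0$) establishes correctly. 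Your separate handling of the boundary case $\delta = 1$ by letting $t \to \infty$ is also valid and is a detail often glossed over. In short: the proposal is a sound proof of a statement the paper only cites, so it goes beyond rather than deviates from the paper.
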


The next lemma tells us that, for a random~$X$ following a binomial law, the probability of exceeding $\E[X]$ is bounded from above by roughly the term with the highest probability.

\begin{lemma}[{\citealp[Eq.~(10.62)]{Doerr20bookchapter}}]
    \label{lem:binExceedingExpectedValue}
    Let $k \in \N$, $p \in [0,1]$, $X \sim \mathrm{Bin}(k, p)$, and let $m \in \big[\E[X] + 1 .. k\big]$. Then
    \begin{align*}
        \Pr[X \geq m] \leq \frac{m(1 - p)}{m - \E[X]} \cdot \Pr[X = m].
    \end{align*}
\end{lemma}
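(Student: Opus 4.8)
The plan is to use the standard technique of bounding the upper tail of a binomial distribution by a geometric series whose leading term is $\Pr[X = m]$. Since the claim is trivial when $p \in \{0, 1\}$ (the index set $[\E[X] + 1 .. k]$ is then either empty or forces $\Pr[X = m] = \Pr[X \geq m] = 0$), I would first reduce to the case $p \in (0, 1)$, in which every point probability $\Pr[X = j]$ with $j \in [0 .. k]$ is strictly positive, so that ratios of consecutive probabilities are well defined.

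The first substantial step is to compute the ratio of consecutive point probabilities. From $\Pr[X = j] = \binom{k}{j} p^j (1 - p)^{k - j}$ one gets, for $j \in [0 .. k - 1]$,
\[
\frac{\Pr[X = j + 1]}{\Pr[X = j]} = \frac{k - j}{j + 1} \cdot \frac{p}{1 - p}.
\]
The key observation is that $j \mapsto (k - j)/(j + 1)$ is strictly decreasing, so for every $j \geq m$ this ratio is at most its value at $j = m$; using $m + 1 > m$ and $k - m \geq 0$ I can bound it further by
\[
\frac{\Pr[X = j + 1]}{\Pr[X = j]} \leq \frac{k - m}{m + 1} \cdot \frac{p}{1 - p} \leq \frac{(k - m) p}{m (1 - p)} =: q.
\]
The crucial point is that $q$ is chosen precisely so that $1/(1 - q)$ matches the target factor: since $m > \E[X] = k p$, one checks $q < 1$ (indeed $q < 1 \iff p(k - m) < m(1 - p) \iff pk < m$), and a short rearrangement gives $1/(1 - q) = m(1 - p)/(m - kp) = m(1 - p)/(m - \E[X])$. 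Telescoping the per-step bound yields $\Pr[X = j] \leq q^{\,j - m}\, \Pr[X = m]$ for all $j \geq m$, and summing the tail gives
\[
\Pr[X \geq m] = \sum_{j = m}^{k} \Pr[X = j] \leq \Pr[X = m] \sum_{i = 0}^{\infty} q^{\,i} = \frac{\Pr[X = m]}{1 - q} = \frac{m(1 - p)}{m - \E[X]} \cdot \Pr[X = m],
\]
which is exactly the claimed inequality.

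I do not expect any genuine obstacle: the argument is elementary and self-contained. The only mildly delicate points are (i) identifying the correct geometric ratio $q$ by reverse-engineering it from the desired bound rather than from a naive worst-case estimate of the ratio, and (ii) verifying $q < 1$, for which the hypothesis $m \geq \E[X] + 1 > \E[X]$ is exactly what is needed. Replacing the finite tail sum by the full geometric series loses only a little and keeps the final expression in the clean stated form.
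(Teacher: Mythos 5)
Your proof is correct. Note that the paper itself offers no proof of this lemma---it is quoted verbatim from the cited source (Doerr, 2020, Eq.~(10.62))---so there is no in-paper argument to compare against; your geometric-series tail bound, with ratio $q = (k-m)p/\big(m(1-p)\big)$, the check $q < 1$ via $m > \E[X]$, and the identity $1/(1-q) = m(1-p)/(m-\E[X])$, is exactly the standard derivation of that inequality, and your handling of the degenerate cases $p \in \{0,1\}$ is also sound.
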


We use \Cref{lem:binExceedingExpectedValue} for the converse case, that is, in order to bound the probability that a binomially distributed random variable is smaller than its expected value.

\begin{corollary}
    \label{cor:binBelowExpectedValue}
    Let $k \in \N$, $p \in [0,1]$, $X \sim \mathrm{Bin}(k, p)$, and let $m \in \big[0 .. \E[X] - 1\big]$. Then
    \begin{align*}
        \Pr[X \leq m] \leq \frac{(k - m)p}{\E[X] - m} \cdot \Pr[X = m].
    \end{align*}
\end{corollary}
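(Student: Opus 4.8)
The plan is to reduce this corollary to \Cref{lem:binExceedingExpectedValue} by exploiting the reflection symmetry of the binomial distribution. The key observation is that if $X \sim \mathrm{Bin}(k, p)$, then $Y \coloneqq k - X \sim \mathrm{Bin}(k, 1 - p)$, and the event $\{X \leq m\}$ is identical to the event $\{Y \geq k - m\}$. So the lower-tail statement about~$X$ below its mean translates exactly into an upper-tail statement about~$Y$ above its mean, which is precisely the regime covered by \Cref{lem:binExceedingExpectedValue}.

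First I would set $q \coloneqq 1 - p$ and $Y \coloneqq k - X$, so that $Y \sim \mathrm{Bin}(k, q)$ with $\E[Y] = kq = k - \E[X]$. Rewriting the target event gives $\Pr[X \leq m] = \Pr[Y \geq k - m]$. To apply \Cref{lem:binExceedingExpectedValue} to~$Y$ with threshold $m' \coloneqq k - m$, I need to check the admissibility condition $m' \in \big[\E[Y] + 1 .. k\big]$. This is where a small amount of care is warranted: from the hypothesis $m \leq \E[X] - 1$ we get $m' = k - m \geq k - \E[X] + 1 = \E[Y] + 1$, and from $m \geq 0$ we get $m' = k - m \leq k$. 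Hence the hypotheses of the lemma are met.

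Then I would simply invoke \Cref{lem:binExceedingExpectedValue} for~$Y$ at the point $m' = k - m$ and substitute back. The bound it yields is
\begin{align*}
    \Pr[Y \geq k - m] \leq \frac{(k - m)(1 - q)}{(k - m) - \E[Y]} \cdot \Pr[Y = k - m].
\end{align*}
Translating each factor via $1 - q = p$, $(k - m) - \E[Y] = (k - m) - (k - \E[X]) = \E[X] - m$, and $\Pr[Y = k - m] = \Pr[k - X = k - m] = \Pr[X = m]$ turns the right-hand side into exactly $\frac{(k - m)p}{\E[X] - m} \cdot \Pr[X = m]$, completing the argument.

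There is no genuine obstacle here: the proof is a one-line reduction followed by bookkeeping. The only point that requires attention is the verification of the index range $k - m \in \big[\E[Y] + 1 .. k\big]$, since the lemma being cited is stated only for thresholds strictly above the mean, and one must confirm that the translated threshold lands in that window (which it does precisely because $m$ ranges over $\big[0 .. \E[X] - 1\big]$). Everything else is a direct symmetry substitution.
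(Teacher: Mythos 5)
Your proof is correct and is essentially identical to the paper's own argument: both reduce the lower-tail bound to \Cref{lem:binExceedingExpectedValue} by passing to $k - X \sim \mathrm{Bin}(k, 1-p)$ and the threshold $k - m$, then translating the resulting bound back. The only difference is cosmetic—you spell out the verification that $k - m \in \big[\E[k - X] + 1 \,..\, k\big]$, which the paper merely asserts.
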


\begin{proof}
    Let $\overline{X} \coloneqq k - X$, and let $\overline{m} \coloneqq k - m$. Note that $\overline{X} \sim \mathrm{Bin}(k, 1 - p)$ with $\E[\overline{X}] = k - \E[X]$ and that $\overline{m} \in \big[\E[\overline{X}] + 1 .. k\big]$. With \Cref{lem:binExceedingExpectedValue}, we compute
    \begin{align*}
        \Pr[X \leq m] = \Pr[\overline{X} \geq \overline{m}] \leq \frac{\overline{m}p}{\overline{m} - \E[\overline{X}]} \cdot \Pr[\overline{X} = \overline{m}] = \frac{(k - m)p}{\E[X] - m} \cdot \Pr[X = m],
    \end{align*}
    which proves the claim.
\end{proof}

Last, the following theorem deals with a \emph{neutral} bit in a fitness function~$f$, that is, a position $i \in [n]$ such that bit values at~$i$ do not contribute to the fitness value at all. The following theorem by~\cite{DoerrZ20tec} states that if the UMDA optimizes such an~$f$, then the frequency at position~$i$ stays close to its initial value~$\frac{1}{2}$ for $\Omega(\mu)$ iterations. We go more into detail about how this relates to \dlb at the beginning of \Cref{sec:runTimeResult}.

\begin{theorem}[{\citealp[Theorem~$2$]{DoerrZ20tec}}]
    \label{thm:neutralFrequency}
    Consider the UMDA optimizing a fitness function~$f$ with a neutral bit $i \in [n]$. Then, for all $d > 0$ and all $t \in \N$, we have
    \begin{align*}
        \Pr\!\big[\forall t' \in [0 .. t]\colon |p_i^{(t')} - \tfrac{1}{2}| < d\big] \geq 1 - 2 \exp\left(- \frac{d^2 \mu}{2 t}\right).
    \end{align*}
\end{theorem}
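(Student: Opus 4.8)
The plan is to exploit neutrality to turn the frequency at position~$i$ into a martingale (up to the capping) and then apply an exponential maximal inequality. First I would use that, since bit~$i$ is neutral, the fitness of every individual, and hence the entire choice of the~$\mu$ best out of the~$\lambda$ samples including the uniform tie-breaking, is stochastically independent of the bit values at position~$i$. Consequently, conditioned on $p^{(t')}$, the bit-$i$ entries of the selected individuals $y^{(1)},\dots,y^{(\mu)}$ are still~$\mu$ independent $\mathrm{Bernoulli}(p_i^{(t')})$ variables. Writing $\tilde p_i^{(t'+1)} \coloneqq \frac1\mu\sum_{j=1}^\mu y^{(j)}_i$ for the frequency \emph{before} the capping step, this yields $\mu\,\tilde p_i^{(t'+1)}\sim\mathrm{Bin}(\mu,p_i^{(t')})$ and in particular $\E[\tilde p_i^{(t'+1)}\mid p^{(t')}]=p_i^{(t')}$, so the uncapped update is an unbiased (martingale) step.

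The main obstacle is the capping into $[\tfrac1n,1-\tfrac1n]$, which destroys the exact martingale property. I would handle it by a coupling/stopping argument. Define the genuine martingale~$S$ by $S_0=\tfrac12$ and $S_{t'+1}=S_{t'}+(\tilde p_i^{(t'+1)}-p_i^{(t')})$; its increments are conditionally centered, so~$S$ is a martingale irrespective of capping. A short induction shows $p_i^{(t')}=S_{t'}$ for as long as no capping has yet occurred. Moreover, capping is triggered only when $\tilde p_i^{(t'+1)}$ leaves $[\tfrac1n,1-\tfrac1n]$, that is, only at a deviation $|\tilde p_i^{(t'+1)}-\tfrac12|>\tfrac12-\tfrac1n$. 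Hence, provided $d\le\tfrac12-\tfrac1n$ (the case $d>\tfrac12-\tfrac1n$ is vacuous, since capping keeps every $p_i^{(t')}$ within distance $\tfrac12-\tfrac1n<d$ of~$\tfrac12$), the first time the true process deviates from~$\tfrac12$ by at least~$d$ cannot precede the first such time for~$S$. This gives the domination
\[
\Pr\!\big[\exists t'\in[0 .. t]\colon |p_i^{(t')}-\tfrac12|\ge d\big]\le \Pr\!\Big[\max_{t'\in[0 .. t]}|S_{t'}-\tfrac12|\ge d\Big],
\]
reducing everything to a deviation bound for the clean martingale~$S$.

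To bound the right-hand side I would use the exponential method together with Doob's maximal inequality. The crucial quantitative input is that, although a single increment of~$S$ can be as large as~$1$ in absolute value, it concentrates at scale $1/\sqrt\mu$: since $S_{t'+1}-S_{t'}=\frac1\mu\sum_{j=1}^\mu(y^{(j)}_i-p_i^{(t')})$ is an average of~$\mu$ independent centered variables each confined to an interval of length~$1$, Hoeffding's lemma gives the conditional estimate $\E[e^{s(S_{t'+1}-S_{t'})}\mid\mathcal F_{t'}]\le e^{s^2/(8\mu)}$, whence $\E[e^{s(S_t-1/2)}]\le e^{ts^2/(8\mu)}$ by the tower rule. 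Because $e^{s(S_{t'}-1/2)}$ is a nonnegative submartingale, Doob's inequality yields $\Pr[\max_{t'\le t}(S_{t'}-\tfrac12)\ge d]\le e^{-sd}\,\E[e^{s(S_t-1/2)}]\le e^{-sd+ts^2/(8\mu)}$; optimizing over~$s$ (namely $s=4\mu d/t$) and adding the symmetric estimate for the lower tail produces a bound of the form $2\exp(-2d^2\mu/t)$.

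Note that this is even stronger than the claimed $2\exp(-d^2\mu/(2t))$, so a slightly cruder constant in the moment-generating-function step would already suffice; the important point is that a crude bound on the increment size (which would only give an Azuma estimate without any~$\mu$ in the exponent) is \emph{not} enough, and one must use the binomial concentration at scale $1/\sqrt\mu$. I expect the delicate part of the argument to be the careful bookkeeping in the coupling step that legitimizes replacing the capped frequency by the true martingale~$S$; the remaining estimates are the routine exponential-martingale calculation.
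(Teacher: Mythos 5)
Your proposal is correct, but note that there is nothing in this paper to compare it against: \Cref{thm:neutralFrequency} is quoted from the literature (Theorem~2 of the cited work of Doerr and Zheng) and is used here as a black box, so the paper contains no proof of it for your attempt to be measured against. Your argument is the natural one and, as far as the cited source goes, essentially the standard martingale route: neutrality makes the bit-$i$ values of the $\mu$ selected individuals i.i.d.\ $\mathrm{Bernoulli}\big(p_i^{(t')}\big)$ conditional on the current model, so the uncapped frequency performs centered steps whose increments, being averages of $\mu$ centered $[0,1]$-valued variables, satisfy the conditional MGF bound $e^{s^2/(8\mu)}$; the capping is correctly neutralized by your stopping/coupling observation, since the capped and uncapped processes coincide until the uncapped one has already deviated by more than $\tfrac12-\tfrac1n \ge d$, and the case $d > \tfrac12 - \tfrac1n$ is indeed vacuous because capping confines $p_i^{(t')}$ to $[\tfrac1n, 1-\tfrac1n]$; Doob's submartingale inequality plus optimizing $s = 4\mu d/t$ then finishes. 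All steps check out, including the pathwise implication between the deviation events of $p_i$ and of $S$. Your exponent $2d^2\mu/t$ is in fact a factor of four stronger than the stated $d^2\mu/(2t)$ (the stated form corresponds to the cruder Azuma-type estimate using only that increments have absolute value at most $1/\mu$), so your bound implies the theorem as stated.
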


\section{Run Time Analysis of the UMDA}
\label{sec:runTimeResult}

In the following, we prove that the UMDA optimizes \dlb efficiently, which is the following theorem.

\newcommand*{\cmu}{c_{\mu}}
\newcommand*{\clambda}{c_{\lambda}}
\begin{theorem}
    \label{thm:UMDA_on_LO}
    Let $\delta, \varepsilon, \zeta \in (0, 1)$ be constants, and let $\cmu = 16/\varepsilon^2$ and $\clambda = (1 - \zeta)(1 - \delta^2)(1 - \varepsilon)^4/(16e)$. Consider the UMDA optimizing \dlb with $\mu \geq \cmu n \ln n$ and $\mu / \lambda \leq \clambda$. Then the UMDA samples the optimum after $\lambda (\frac{n}{2} + 2 e \ln n)$ fitness function evaluations with a probability of at least $1 - 9 n^{-1}$.
\end{theorem}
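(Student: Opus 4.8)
The plan is to treat \dlb as a blocked version of \LO and to track the ``frontier'' $F_t$, defined as the number of leading blocks whose two frequencies have both been driven to the upper cap $1 - \frac1n$. Because the UMDA samples the two bits of a block independently and sets a frequency to the fraction of selected individuals carrying a one there, it can raise both frequencies of a block to the cap \emph{simultaneously}, i.e.\ it can install the value $11$ without ever passing through the fitness-inferior intermediate values $01$ or $10$. This is precisely why the deception that obstructs the mutation-based \oea\ does not obstruct the UMDA, and it is the conceptual heart of the argument. I would show that, with overwhelming probability, $F_t$ increases by at least one in every iteration until it reaches $n/2$, so that all $n/2$ blocks are solved within $n/2$ iterations; since making $F$ reach $n/2$ forces the $\mu$ selected individuals to consist entirely of ones, the optimum is sampled at that moment.

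The argument rests on two pillars. The first is that the frequencies of not-yet-solved blocks never drop far below $\frac12$; this is the content of \Cref{lem:frequencies_do_not_drop_too_low}. The key observation is that a bit lying in a block strictly beyond the frontier does not influence which individuals are selected, so its frequency performs the same unbiased random walk as that of a genuinely neutral bit. Hence \Cref{thm:neutralFrequency} applies and, with $d$ a small multiple of $\delta$ and the number of iterations being $O(n)$, each such frequency stays within $d$ of $\frac12$ with failure probability $2\exp(-d^2\mu/(2t)) = n^{-\Omega(1)}$, using crucially that $\mu \geq \cmu n \ln n = \Theta(n\ln n)$ dominates $t = O(n)$. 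A union bound over the $n$ bits keeps the total failure probability $O(1/n)$. This is the step where preventing genetic drift through a large $\mu$ is indispensable.

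The second pillar is the single-iteration advance. Condition on the good event that the first $F_t$ blocks sit at the cap and every later frequency is at least $\frac{1-\delta}{2}$. The number $X$ of sampled individuals whose first $F_t+1$ blocks all read $11$ is then binomially distributed with mean at least $\lambda\,(1-\tfrac1n)^{2F_t}\,(\tfrac{1-\delta}{2})^2 \geq \frac{\lambda(1-\delta)^2}{4e}\,(1-o(1))$, because $(1-\tfrac1n)^{2F_t} \geq (1-\tfrac1n)^{n} = e^{-1}(1-o(1))$. Since $\mu \leq \clambda\lambda$ and $\clambda$ is chosen smaller than $\frac{1}{4e}$ by exactly the constant factors $(1-\zeta)$, $(1-\delta^2)$ and $(1-\varepsilon)^4$, this mean exceeds $\mu$ by a constant factor, and \Cref{thm:chernoff} (or \Cref{cor:binBelowExpectedValue}) yields $X \geq \mu$ with probability $1 - \exp(-\Omega(\mu)) = 1 - n^{-\omega(1)}$. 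A short fitness comparison shows that any individual whose first $F_t+1$ blocks are all $11$ has fitness at least $2(F_t+1)$, whereas every other individual has fitness at most $2(F_t+1)-1$; hence whenever $X \geq \mu$, the $\mu$ selected individuals all carry $11$ in each of the first $F_t+1$ blocks, the corresponding frequencies are set to $1$ and capped to $1-\frac1n$, and $F_{t+1} \geq F_t+1$.

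Finally I would take a union bound over the at most $\frac n2 + 2e\ln n$ iterations of both failure events, each at most polynomially small, and bound the total by $9/n$ with the stated constants. On the complementary event the frontier advances monotonically to $n/2$ within $n/2$ iterations and the optimum is sampled; the lower-order $2e\ln n$ iterations serve as slack for the endgame and the union bound. The step I expect to be the genuine obstacle is the interface between the two pillars: the deceptive ordering $11 \succ 00 \succ 01,10$ exerts \emph{downward} pressure on a block's frequencies precisely in the iteration in which it ceases to be neutral, because some sampled individuals reach that block by chance and then the inferior $00$ value is preferred over $01$ and $10$. I must argue that this downward pull is always dominated by the upward pull of the roughly $\mu$ individuals already displaying $11$ there -- which is exactly the tight comparison of the count of individuals with prefix $F_t+2$ against $\mu$. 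Controlling this transition rigorously, rather than the two pillars in isolation, is where the carefully tuned constants $\cmu$ and $\clambda$ earn their keep.
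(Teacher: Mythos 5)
Your overall strategy coincides with the paper's: a frontier of capped blocks that advances at least once per iteration, neutrality (\Cref{thm:neutralFrequency}) for the frequencies beyond the selection-relevant block, and a Chernoff argument showing that at least $\mu$ offspring carry the extended all-ones prefix so that the next block's frequencies jump to $1-\frac1n$. However, there is a genuine gap, located exactly where you yourself flag the difficulty: your second pillar conditions on \emph{every} frequency beyond the frontier being at least $(1-\delta)/2$, including the two frequencies of block $F_t+1$, and you justify this by the neutrality argument. That justification fails for precisely that block. Under your own induction, the selection-relevant block of iteration $t-1$ is some block $j \ge F_{t-1}+2$ with fewer than $\mu$ sampled individuals having $11$ there; in that iteration block $j$ is no longer neutral, and the deceptive selection (all of the fewer-than-$\mu$ individuals with $11$ are kept, but the remaining selection slots are filled preferentially with $00$-individuals) pushes its frequencies \emph{down}. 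In iteration $t$, the critical block $F_t+1$ is typically exactly this block $j$. The worked example preceding \Cref{lem:drop_of_a_selection_relevant_block} in the paper shows this drop provably occurs, so the critical block's per-bit frequencies can be well below $(1-\delta)/2$; consequently you cannot run the advance argument with per-block success probability $\approx \frac14$ and a threshold $c_\lambda < \frac{1}{4e}$. This is also why the paper's $c_\lambda$ carries a $16e$, not a $4e$, in the denominator: the advance lemma (\Cref{lem:increasing_a_frequency}) must work from a per-block $11$-probability of only $q \approx \frac{1}{16}$.

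The missing ingredient is the paper's \Cref{lem:drop_of_a_selection_relevant_block}: conditioned on block $i$ being selection-relevant, i.e., on $\{X < \mu\}$ where $X$ counts the individuals with $11$ in block $i$ among those with an all-ones prefix, one must still show $X \ge (1-\delta)\widetilde{p}\mu$ with probability $1 - O(n^{-2})$, where $\widetilde{p} = p^{(t)}_{2i-1}p^{(t)}_{2i} \ge (1-\varepsilon)^2/4$. This is delicate because the conditioning event biases $X$ downward; the paper needs a case distinction on whether $\E[X] < \mu$ or $\E[X] \ge \mu$, and in the latter case a ratio estimate of $\Pr[X \le m-1]$ against $\Pr[X = \mu-1]$ via \Cref{cor:binBelowExpectedValue} and explicit factorial manipulations. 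Your sketched fix -- that the downward pull is ``dominated by the upward pull of the roughly $\mu$ individuals already displaying $11$ there'' -- mischaracterizes the situation: when the block is selection-relevant there are by definition \emph{fewer} than $\mu$ such individuals, and the frequencies genuinely do drop below $(1-\varepsilon)/2$; what saves the proof is a quantitative lower bound (about $\frac14$ per frequency, hence about $\frac1{16}$ for sampling $11$ in the block) on how far they can drop, which is then fed into the advance step with correspondingly retuned constants. Without this lemma, or an equivalent control of the conditional law of $X$ given $X<\mu$, your induction cannot close.
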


Before we present the proof, we sketch its main ideas and introduce important notation. We show that the frequencies of the UMDA are set to $1 - \frac{1}{n}$ block-wise from left to right with high probability. We formalize this concept by defining that a block $i \in [\frac{n}{2}]$ is \emph{critical} (in iteration~$t$) if and only if $p^{(t)}_{2i - 1} + p^{(t)}_{2i} < 2 - \frac{2}{n}$ and, for each index $j \in [2i - 2]$, the frequency~$p^{(t + 1)}_j$ is at $1 - \frac{1}{n}$. Intuitively, a critical block is the first block whose frequencies are not at their maximum value. We prove that a critical block is optimized within a single iteration with high probability if we assume that its frequencies are not below $(1 - \varepsilon)/2$, for $\varepsilon \in (0, 1)$ being a constant.

In order to assure that the frequencies of each block are at least $(1 - \varepsilon)/2$ until it becomes critical, we show that most of the frequencies right of the critical block are not impacted by the fitness function. We call such frequencies \emph{neutral.} More formally, a frequency~$p_i$ is neutral in iteration~$t$ if and only if the probability to have a~$1$ at position~$i$ in each of the~$\mu$ \emph{selected} individuals equals~$p^{(t)}_i$. Note that since we assume that $\mu = \Omega(n \log n)$, by \Cref{thm:neutralFrequency}, the impact of the genetic drift on neutral frequencies is low with high probability.

We know which frequencies are neutral and which are not by the following key observation: consider a population of~$\lambda$ individuals of the UMDA during iteration~$t$; only the first (leftmost) block that has strictly fewer than~$\mu$ $11$s is relevant for selection, since the fitness of individuals that do not have a~$11$ in this block cannot be changed by bits to the right anymore. We call this block \emph{selection-relevant.} Note that this is a notion that depends on the random offspring population in iteration~$t$, whereas the notion \emph{critical} depends only on~$p^{(t)}$.

The consequences of a selection-relevant block are as follows: if block $i \in [\frac{n}{2}]$ is selection-relevant, then all frequencies in blocks left of~$i$ are set to $1 - \frac{1}{n}$, since there are at least~$\mu$ individuals with $11$s. All blocks right of~$i$ have \emph{no} impact on the selection process: if an individual has no $11$ in block~$i$, its fitness is already fully determined by all of its bits up to block~$i$ by the definition of \dlb. If an individual has a $11$ in block~$i$, it is definitely chosen during selection, since there are fewer than~$\mu$ such individuals and since its fitness is better than that of all of the other individuals that do not have a~$11$ in block~$i$. Thus, its bits at positions in blocks right of~$i$ are irrelevant for selection. Overall, since the bits in blocks right of~$i$ do not matter, the frequencies right of block~$i$ get no signal from the fitness function and are thus neutral (see \Cref{lem:frequencies_do_not_drop_too_low}).

Regarding block~$i$ itself, all of the individuals with $11$s are chosen, since they have the best fitness. Nonetheless, individuals with a $00$, $01$, or $10$ can also be chosen, where an individual with a $00$ in block~$i$ is preferred, as a $00$ has the second best fitness after a~$11$. Since the fitness for a $10$ or $01$ is the same, selecting individuals with such blocks does not impact the number of~$1$s at the positions in block~$i$ in expectation. However, if more $00$s than $11$s are sampled for block~$i$, it can happen that the frequencies of block~$i$ are decreased. Since we assume that $\mu = \Omega(n \log n)$, the frequency is sufficiently high before the update and the frequencies of block~$i$ do not decrease by much with high probability (see \Cref{lem:drop_of_a_selection_relevant_block}). Since, in the next iteration, block~$i$ is the critical block, it is then optimized within a single iteration (see \Cref{lem:increasing_a_frequency}), and we do not need to worry about its frequencies decreasing again.

\subsubsection*{Neutral frequencies.}

We now prove that the frequencies right of the selection-relevant block do not decrease by too much within the first~$n$ iterations.

\begin{lemma}
    \label{lem:frequencies_do_not_drop_too_low}
    Let $\varepsilon \in (0, 1)$ be a constant. Consider the UMDA with $\lambda \geq \mu \geq (16 n / \varepsilon^2) \ln n$ optimizing \dlb. Let $t \leq n$ be the first iteration such that block $i \in [\frac{n}{2}]$ becomes selection-relevant for the first time. Then, with a probability of at least $1 - 2 n^{-1}$, all frequencies at the positions $[2i + 1 .. n]$ are at least~$(1 - \varepsilon)/2$ within the first~$t$ iterations.
\end{lemma}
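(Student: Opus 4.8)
The plan is to show that every position to the right of block~$i$ is \emph{neutral} in all iterations up to~$t$, so that its frequency performs exactly the unbiased random walk governed by \Cref{thm:neutralFrequency}, and then to control the probability that any such frequency drops below $(1-\varepsilon)/2$ by a union bound over the at most~$n$ positions involved.

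First I would establish the structural fact that in every iteration $t' \in [0 .. t]$ the selection-relevant block is at most~$i$. By the definition of~$t$, block~$i$ is selection-relevant for the first time in iteration~$t$, hence not in any earlier iteration. A selection-relevant block strictly larger than~$i$ would mean that at least~$\mu$ offspring have a~$11$ in each of the blocks $1, \ldots, i$; in that case all~$\mu$ selected individuals carry a~$11$ in block~$i$, the update sets $p^{(t'+1)}_{2i-1} = p^{(t'+1)}_{2i} = 1 - \tfrac{1}{n}$, and block~$i$ then remains solved, so it can never again be the leftmost block with fewer than~$\mu$ such individuals, contradicting that it is selection-relevant in iteration~$t$. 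Thus the selection-relevant block is at most~$i$ throughout $[0 .. t]$, and every position $j \in [2i+1 .. n]$ lies strictly to its right. By the key observation preceding the lemma, the selection among the~$\lambda$ offspring then depends only on the bits in blocks $1, \ldots, i$ and on the uniform tie-breaking, hence is independent of the bit values at position~$j$. Consequently the bits at position~$j$ of the~$\mu$ selected individuals are independent $\mathrm{Bernoulli}(p^{(t')}_j)$ samples, which is exactly the condition for~$j$ to be neutral, so the sequence $p^{(0)}_j, \ldots, p^{(t)}_j$ evolves as the frequency of a neutral bit and \Cref{thm:neutralFrequency} applies.

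With neutrality in hand, I would apply \Cref{thm:neutralFrequency} to each fixed position $j \in [2i+1 .. n]$ with $d = \varepsilon/2$ and the iteration bound $t \leq n$. Since $|p^{(t')}_j - \tfrac12| < \varepsilon/2$ implies $p^{(t')}_j > (1-\varepsilon)/2$, the probability that $p^{(t')}_j$ falls below $(1-\varepsilon)/2$ for some $t' \in [0 .. t]$ is at most
\begin{align*}
2\exp\!\left(-\frac{(\varepsilon/2)^2 \mu}{2t}\right) \leq 2\exp\!\left(-\frac{\varepsilon^2 \mu}{8n}\right) \leq 2\exp(-2\ln n) = 2 n^{-2},
\end{align*}
where the last inequality uses $\mu \geq (16 n/\varepsilon^2)\ln n$. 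A union bound over the at most~$n$ positions in $[2i+1 .. n]$ then gives a total failure probability of at most $n \cdot 2 n^{-2} = 2 n^{-1}$, which is the claim.

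The step I expect to be the main obstacle is the neutrality argument of the second paragraph, in particular ruling out that block~$i$ gets ``solved for free'' so that the selection-relevant block jumps strictly past~$i$ before iteration~$t$. Making this fully rigorous requires arguing that, conditioned on the history, the relevant frequency update genuinely is an unbiased resampling, so that the hypotheses of \Cref{thm:neutralFrequency} (phrased for a globally neutral bit) are met by the only dynamically neutral positions at hand; the concentration of the offspring counts underpinning the structural claim can be supplied by \Cref{thm:chernoff}. Everything after neutrality is established is routine.
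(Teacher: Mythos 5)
Your second and third paragraphs are exactly the paper's proof: deduce that the positions in $[2i+1\,..\,n]$ are neutral during the first~$t$ iterations, apply \Cref{thm:neutralFrequency} with $d = \varepsilon/2$ and $t \leq n$ to get a per-position failure probability of at most $2\exp(-\varepsilon^2\mu/(8n)) \leq 2n^{-2}$, and finish with a union bound over at most~$n$ positions. The paper does nothing beyond this; it obtains neutrality directly from the lemma's hypothesis together with the key observation stated before the lemma.

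The step you add in your first paragraph---and which you yourself flag as the main obstacle---does not hold as stated. You argue that if the selection-relevant block were some $i'' > i$ at an iteration $t' < t$, then $p^{(t'+1)}_{2i-1} = p^{(t'+1)}_{2i} = 1 - \frac{1}{n}$ (true), and that block~$i$ would therefore ``remain solved'' and could never be selection-relevant afterwards (false). Having all prefix frequencies at $1 - \frac{1}{n}$ does not guarantee that at least~$\mu$ of the~$\lambda$ offspring of a later iteration sample $11$s in all blocks $1, \ldots, i$; this is a random event, and the lemma's hypotheses give no control over it: only $\lambda \geq \mu$ is assumed, and in the extreme case $\mu = \lambda$ the event that \emph{all} offspring have $2i$ leading ones has probability $(1 - \frac{1}{n})^{2i\lambda}$, so with probability close to one the selection-relevant block falls back to an index at most~$i$ in the very next iteration. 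Hence your contradiction does not materialize, and the scenario you want to exclude (the selection-relevant block jumps past~$i$ and later returns) cannot be ruled out deterministically; nor can it be patched by \Cref{thm:chernoff} inside this lemma, since that would require $\mu/\lambda$ bounded away from~$1$ (available in \Cref{thm:UMDA_on_LO}, but not here) and would add failure probability that your final bound of $2n^{-1}$ does not account for. The scenario is also not harmless: while the selection-relevant block sits at $i'' > i$, the positions $2i''-1, 2i''$ lie in $[2i+1\,..\,n]$, are \emph{not} neutral, and can genuinely drop below $(1-\varepsilon)/2$; this is precisely the phenomenon of \Cref{lem:drop_of_a_selection_relevant_block} and the example preceding it. The paper sidesteps all of this by reading its hypothesis, ``$t$ is the first iteration such that block~$i$ becomes selection-relevant,'' as entailing that the selection-relevant block stayed strictly left of block~$i$ in all earlier iterations---which is also the only situation in which the lemma is invoked in the proof of \Cref{thm:UMDA_on_LO}, where \Cref{lem:drop_of_a_selection_relevant_block,lem:increasing_a_frequency} ensure, within the high-probability event, that selection-relevant blocks are encountered in increasing order. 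If you drop your first paragraph and take neutrality as given by that reading of the hypothesis, your proof coincides with the paper's.
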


\begin{proof}
    Let $j \in [2i + 1 .. n]$ denote the index of a frequency right of block~$i$. Note that by the assumption that~$t$ is the first iteration such that block~$i$ becomes selection-relevant it follows that, for all $t' \leq t$, the frequency~$p^{(t')}_i$ is neutral, as we discussed above.
    
    Since~$p^{(t')}_j$ is neutral for all $t' \leq t$, by \Cref{thm:neutralFrequency} with $d = \frac{\varepsilon}{2}$, we see that the probability that~$p_j$ leaves the interval $\big((1 - \varepsilon)/2, (1 + \varepsilon)/2\big)$ within the first $t \leq n$ iterations is at most $2 \exp\big(-\varepsilon^2 \mu / (8 t)\big) \leq 2 \exp\big(-\varepsilon^2 \mu / (8 n)\big) \leq 2 n^{-2}$, where we used our bound on~$\mu$.
    
    Applying a union bound over all $n - 2i \leq n$ neutral frequencies yields that at least one frequency leaves the interval $\big((1 - \varepsilon)/2, (1 + \varepsilon)/2\big)$ within the first~$t$ iterations with a probability of at most $2 n^{-1}$, as desired.
\end{proof}

\subsubsection*{Update of the selection-relevant block.}

As mentioned at the beginning of the section, while frequencies right of the selection-relevant block do not drop below $(1 - \varepsilon)/2$ with high probability (by \Cref{lem:frequencies_do_not_drop_too_low}), the frequencies of the selection-relevant block can drop below $(1 - \varepsilon)/2$, as the following example shows.

\begin{example}
    Consider the UMDA with $\mu = 0.05 \lambda \geq c \ln n$, for a sufficiently large constant~$c$, optimizing \dlb. Consider an iteration~$t$ and assume that block $i = \frac{n}{2} - 1 - o(n)$  is critical. Assume that the frequencies in blocks~$i$ and~$i + 1$ are all at $2/5$. Since the prefix of $2(i - 1)$ leading~$1$s is sampled with probability $(1 - \frac{1}{n})^{2(i - 1)} \geq (1 - \frac{1}{n})^{n - 1} \geq 1/e$, the offspring population in iteration~$t$ has roughly $\big((2/5)^2 / e\big) \lambda \approx 0.058 \lambda > \mu$ individuals with at least $2i$ leading~$1$s in expectation. By \Cref{thm:chernoff}, this also holds with high probability. Thus, the frequencies in block~$i$ are set to $1 - \frac{1}{n}$ with high probability.
    
    The expected number of individuals with at least~$2i + 2$ leading~$1$s is roughly $\big((2/5)^4/e\big) \lambda \approx 0.0095 \lambda$, and the expected number of individuals with $2i$ leading~$1$s followed by a~$00$ is roughly $\big((2/5)^2 \cdot (3/5)^2 / e\big) \lambda \approx 0.02 \lambda$. In total, we expect approximately $0.0295 \lambda < \mu$ individuals with $2i$ leading~$1$s followed by either a~$11$ or a~$00$. Again, by \Cref{thm:chernoff}, these numbers occur with high probability. Note that this implies that block $i + 1$ is selection-relevant with high probability.
    
    Consider block~$i + 1$. For selection, we choose all $0.0295 \lambda$ individuals with $2i$ leading~$1$s followed by either a~$11$ or a~$00$ (which are sampled with high probability). For the remaining $\mu - 0.0295 \lambda = 0.0205 \lambda$ selected individuals with $2i$ leading~$1$s, we expect half of them, that is, $0.01025 \lambda$ individuals to have a~$10$. Thus, with high probability, the frequency $p^{(t + 1)}_{2i + 1}$ is set to roughly $(0.0095 + 0.01025) \lambda / \mu = 0.395$, which is less than $0.4 = p^{(t)}_{2i + 1}$. Thus, this frequency decreased.
\end{example}

The next lemma shows that such frequencies do not drop too low, however.

\begin{lemma}
    \label{lem:drop_of_a_selection_relevant_block}
    Let $\varepsilon, \delta \in (0, 1)$ be constants, and let $c$ be a sufficiently large constant. Consider the UMDA with
    $\lambda \geq \mu \geq c \ln n$
    optimizing \dlb. Further, consider an iteration~$t$ such that block $i \in [2 .. \frac{n}{2}]$ is selection-relevant, and assume that its frequencies~$p^{(t)}_{2i - 1}$ and~$p^{(t)}_{2i}$ are at least $(1 - \varepsilon)/2$ when sampling the population. Then the frequencies~$p^{(t + 1)}_{2i - 1}$ and~$p^{(t + 1)}_{2i}$ are at least $(1 - \delta)(1 - \varepsilon)^2/4$ with a probability of at least $1 - 4n^{-2}$.
\end{lemma}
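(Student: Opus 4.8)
The plan is to reduce both frequency bounds to a single lower bound on the number of offspring whose blocks $1,\dots,i$ are all equal to~$11$. Write $N_j$ for the number of offspring of iteration~$t$ that have~$11$ in each of the blocks $1,\dots,j$, so that block~$i$ being selection-relevant means exactly $N_{i-1}\ge\mu>N_i$. Since at least~$\mu$ offspring carry the prefix of $i-1$ leading $11$-blocks, all~$\mu$ selected individuals carry this prefix; moreover the $N_i$ offspring that additionally have~$11$ in block~$i$ have strictly larger fitness than every other selected individual and hence are all selected. Each of them contributes a~$1$ to both positions $2i-1$ and~$2i$, so after the update $p^{(t+1)}_{2i-1}\ge N_i/\mu$ and $p^{(t+1)}_{2i}\ge N_i/\mu$; the restriction to $[\tfrac1n,1-\tfrac1n]$ does not spoil this since the target is a constant in $(\tfrac1n,1-\tfrac1n)$. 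As $(1-\delta)(1-\varepsilon)^2/4=(1-\delta)q'$ with $q'\coloneqq(1-\varepsilon)^2/4$, it therefore suffices to show $N_i\ge(1-\delta)q'\mu$ with probability at least $1-4n^{-1}\cdot n^{-1}$.

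To analyse~$N_i$, I would condition on the outcome of blocks $1,\dots,i-1$, fixing the set of prefix-carrying offspring and its size $N_{i-1}=\nu\ge\mu$. Because the UMDA samples all bits independently, the block-$i$ values of these $\nu$ individuals are then independent with $\Pr[11]=q\coloneqq p^{(t)}_{2i-1}p^{(t)}_{2i}\ge q'$, so $N_i\sim\mathrm{Bin}(\nu,q)$, while selection-relevance imposes the additional conditioning event $\{N_i<\mu\}$. Unconditionally the claim is immediate: since $\E[N_i]=\nu q\ge\mu q'$ and $(1-\delta)q'\mu\le(1-\delta)\E[N_i]$, \Cref{thm:chernoff} gives $\Pr[N_i<(1-\delta)q'\mu]\le\exp(-\delta^2\mu q'/2)\le n^{-3}$ once~$c$ is a large enough constant (using $\mu\ge c\ln n$).

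The main obstacle is the conditioning on $\{N_i<\mu\}$. When $\lambda\gg\mu$ we may have~$\nu$ close to~$\lambda$ and hence $\E[N_i]=\nu q\gg\mu$, so $\{N_i<\mu\}$ is a rare lower-tail event and simply dividing the unconditional bound by $\Pr[N_i<\mu]$ is too lossy. I see two ways around this. The clean route is a monotonicity argument: for fixed~$q$ one has $\mathrm{Bin}(\nu',q)\ge\mathrm{Bin}(\nu,q)$ in the likelihood-ratio order whenever $\nu'\ge\nu$, and likelihood-ratio dominance is preserved when both variables are conditioned on the same event $\{\,\cdot<\mu\,\}$; hence $(N_i\mid N_i<\mu)$ is stochastically smallest at $\nu=\mu$, where $\E[N_i]=\mu q<\mu$, so that $\{N_i<\mu\}$ has constant probability, the conditioning is harmless, and the Chernoff estimate above survives division by $\Pr[N_i<\mu]=\Omega(1)$. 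Alternatively, one can argue directly for every $\nu\ge\mu$ by writing $\Pr[N_i<(1-\delta)q'\mu\mid N_i<\mu]=\Pr[N_i<(1-\delta)q'\mu]/\Pr[N_i<\mu]$, bounding the numerator via \Cref{cor:binBelowExpectedValue} by a constant multiple of the single-point mass $\Pr[N_i=\lceil(1-\delta)q'\mu\rceil]$, bounding the denominator from below by $\Pr[N_i=\mu-1]$, and estimating the resulting ratio of point masses by $\exp(-\Omega(\mu))\le n^{-3}$; here the exponent is the sum of the $\Theta(\mu)$ negative log-ratios $\ln\frac{(k+1)(1-q)}{(\nu-k)q}$ of consecutive probabilities, each at most a negative constant over the lower part of the range. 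Either way the conditional failure probability is $O(n^{-2})$, and a union bound over the two positions of block~$i$ yields the claimed $1-4n^{-2}$.
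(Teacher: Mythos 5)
Your setup coincides with the paper's: you reduce the lemma to a lower bound on the number $N_i$ of offspring whose first $i$ blocks are all $11$, note that conditional on the prefix outcomes (with $N_{i-1}=\nu\ge\mu$) this count is $\mathrm{Bin}(\nu,q)$-distributed, and that selection-relevance truncates it to the event $\{N_i<\mu\}$ --- this is exactly the paper's $X\sim\mathrm{Bin}(k,\widetilde{p})$ conditioned on $X<\mu$, and your argument that every such $11$ survives selection and lifts both frequencies to at least $N_i/\mu$ is also the paper's. Where you genuinely differ is the treatment of the truncation. The paper splits into the cases $\E[X]<\mu$ (Chernoff for the numerator, a constant lower bound for $\Pr[X<\mu]$) and $\E[X]\ge\mu$ (reduction to a ratio of point masses via \Cref{cor:binBelowExpectedValue} plus factorial estimates). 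Your primary route instead collapses all $\nu\ge\mu$ to the single case $\nu=\mu$ by likelihood-ratio dominance; the three facts this needs ($\mathrm{Bin}(\mu,q)\le_{\mathrm{lr}}\mathrm{Bin}(\nu,q)$, preservation of $\le_{\mathrm{lr}}$ under conditioning on a common event, and $\le_{\mathrm{lr}}$ implying stochastic order) are all correct, and this is a cleaner device than the paper's Case~2 computation.

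However, the final step of that route contains an error: at $\nu=\mu$ the event $\{N_i<\mu\}$ does \emph{not} have constant probability in general. Its probability is exactly $1-q^\mu$, and the lemma's hypotheses allow $q$ as large as $(1-\tfrac1n)^2$ together with $\mu=\Theta(\ln n)$, in which case $1-q^\mu=\Theta(\mu/n)=o(1)$. The inference ``$\E[N_i]=\mu q<\mu$, hence $\Pr[N_i<\mu]=\Omega(1)$'' is invalid: the standard ``below its expectation with probability at least $\tfrac14$'' fact needs the success probability to be at most $1-1/(\text{number of trials})$, which fails here. The repair is one line: $1-q^\mu\ge 1-q\ge\tfrac1n$ because frequencies are capped at $1-\tfrac1n$, so choosing $c$ large enough that the Chernoff bound is $n^{-3}$ still gives a conditional failure probability of at most $n^{-2}$. (To be fair, the paper's Case~1 is loose at the same spot: for $k=\mu=\Theta(\ln n)$ and $\widetilde{p}=(1-\tfrac1n)^2$ one is in Case~1 and $\Pr[X<\E[X]]=1-\widetilde{p}^{\,\mu}=o(1)$, so its asserted $\ge\tfrac14$ bound also fails there; with the repair, your route is at least as sound as the paper's.)

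Your alternative route has a gap that cannot be patched so easily: the point-mass argument does \emph{not} work ``for every $\nu\ge\mu$''. If $\nu q<\mu$ --- say $\nu=\mu$ and $q=q'$ constant --- then $\Pr[N_i=\mu-1]=e^{-\Theta(\mu)}$ while $\Pr[N_i<\mu]=1-o(1)$, so lower-bounding the denominator by the point mass at $\mu-1$ loses a factor $e^{\Theta(\mu)}$; equivalently, the consecutive ratios $\frac{(k+1)(1-q)}{(\nu-k)q}$ exceed $1$ for all $k>\nu q$, and there are $\Theta(\mu)$ such indices below $\mu-1$, so the ratio of point masses is $e^{+\Theta(\mu)}$ rather than $e^{-\Omega(\mu)}$. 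This is precisely why the paper performs its case distinction and deploys the point-mass computation only when $\E[X]\ge\mu$. So either drop route two entirely (the repaired route one alone proves the lemma for all $\nu\ge\mu$), or restrict it to $\nu q\ge\mu$, in which case you have essentially reproduced the paper's proof.
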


\begin{proof}
   Let for the moment $i$ be arbitrary. Let~$k$ denote the number of individuals with a prefix of at least $2i - 2$ leading~$1$s and let $X$ be the random variable that counts how many of these have $1$s also in positions $2i-1$ and~$i$. Then~$X$ follows a binomial law with~$k$ trials and with a success probability of $p^{(t)}_{2i - 1} p^{(t)}_{2i} \eqqcolon \widetilde{p} \geq (1 - \varepsilon)^2/4$. Since in the following we condition on block~$i$ being selection-relevant, it follows that $k \geq \mu$ and $X < \mu$. We now bound the probability that at least $(1 - \delta)\widetilde{p}\mu \eqqcolon m$ individuals have~$2i$ leading~$1$s, that is, we bound $\Pr[X \geq m \mid X < \mu]$.
    
    Elementary calculations show that
    \begin{align}
        \notag
        \Pr[X \geq m \mid X < \mu] &= 1 - \Pr[X < m \mid X < \mu] = 1 - \frac{\Pr[X < m, X < \mu]}{\Pr[X < \mu]}\\
        \label{eq:notManyOneOnes}
        &= 1 - \frac{\Pr[X < m]}{\Pr[X < \mu]}.
    \end{align}
    To show a lower bound for \eqref{eq:notManyOneOnes}, consider separately the two cases that $\E[X] < \mu$ and $\E[X] \geq \mu$.
    
    \textbf{Case 1: $\E[X] < \mu$.} We first bound the numerator of the subtrahend in \eqref{eq:notManyOneOnes}. Since $m/(1 - \delta) = \widetilde{p}\mu \leq \widetilde{p}k = \E[X]$, we have $\Pr[X < m] \leq \Pr\big[X < (1 - \delta)\E[X]\big]$. By \Cref{thm:chernoff}, by $\E[X] \geq \widetilde{p}\mu$, and by our assumption that $\mu \geq c \ln n$, choosing~$c$ sufficiently large, we have
    \begin{align*}
        \Pr\big[X < (1 - \delta)\E[X]\big] \leq \exp\!\left(-\frac{\delta^2 \E[X]}{2}\right) \leq \exp\!\left(-\frac{\delta^2 \widetilde{p}\mu}{2}\right) \leq n^{-2}.
    \end{align*}
    
    For bounding the denominator, we note that $\widetilde{p} \leq 1 - \frac{1}{n}$ and use the fact that a binomially distributed random variable with a success probability of at most $1 - \frac{1}{n}$ is below its expectation with a probability of at least~$\frac{1}{4}$ \citep[Lemma~$10.20$~(b)]{Doerr20bookchapter}. This yields
    \begin{align*}
        \Pr[X < \mu] \geq \Pr\big[X < \E[X]\big] \geq \frac{1}{4}.
    \end{align*}
    
    Combining these bounds, we obtain $\Pr[X \geq m \mid X < \mu] \geq 1 - 4n^{-2}$ for this case.
    
    \textbf{Case 2: $\E[X] \geq \mu > m$.} We bound the subtrahend from \eqref{eq:notManyOneOnes} from above. By basic estimations and by \Cref{cor:binBelowExpectedValue}, we see that
    \begin{align}
        \label{eq:probabilityBoundCaseTwo}
        \frac{\Pr[X < m]}{\Pr[X < \mu]} \leq \frac{\Pr[X \leq m - 1]}{\Pr[X = \mu - 1]} \leq \frac{(k - m + 1)\widetilde{p}}{\E[X] - m + 1} \cdot \frac{\Pr[X = m - 1]}{\Pr[X = \mu - 1]}.
    \end{align}
    
    We bound the first factor of \eqref{eq:probabilityBoundCaseTwo} as follows, recalling that $(1 - \delta)\widetilde{p}\mu = m$ and noting that $m \geq 1$ for sufficiently large values of~$n$:
    \begin{align*}
        \frac{(k - m + 1)\widetilde{p}}{\E[X] - m + 1} &\leq \frac{\E[X]}{\E[X] - m} = 1 + \frac{m}{\E[X] - m} \leq 1 + \frac{m}{\mu - m} \leq 1 + \frac{m}{m/\widetilde{p} - m}\\
        &= 1 + \frac{\widetilde{p}}{1 - \widetilde{p}} \leq 1 + n - 1 = n\ ,
    \end{align*}
    where the last inequality uses that $\widetilde{p} \leq (1 - \frac{1}{n})^2 \leq 1 - \frac{1}{n}$.
    
    For the second factor of \eqref{eq:probabilityBoundCaseTwo}, we compute
    \begin{align}
        \notag
        \frac{\Pr[X = m - 1]}{\Pr[X = \mu - 1]} &= \frac{\binom{k}{m - 1} \widetilde{p}^{m - 1} (1 - \widetilde{p})^{k - m + 1}}{\binom{k}{\mu - 1} \widetilde{p}^{\mu - 1} (1 - \widetilde{p})^{k - \mu + 1}}\\
        \label{eq:probabilityQuotient}
        &= \frac{(\mu - 1)! (k - \mu + 1)!}{(m - 1)! (k - m + 1)!} \cdot \left(\frac{1 - \widetilde{p}}{\widetilde{p}}\right)^{\mu - m}.
    \end{align}
    Since $\widetilde{p} \geq (1 - \varepsilon)^2/4$, we see that $(1 - \widetilde{p})/\widetilde{p} \leq 4/(1 - \varepsilon)^2$.
    
    For the first factor of \eqref{eq:probabilityQuotient}, let $p^* \coloneqq (1 - \delta)\widetilde{p}$, thus $\mu p^* = m$. Noting that, for all $a, b \in \R$ with $a < b$, the function $j \mapsto (a + j)(b - j)$ is maximal for $j = (b - a)/2$, we first bound
    \begin{align*}
        \frac{(\mu - 1)!}{(m - 1)!} &= \prod_{j = 0}^{\mu - m - 1} (\mu - 1 - j)
        = \left(\prod_{j = 0}^{\lfloor(\mu - m - 1)/2\rfloor} (\mu - 1 - j)\right) \left(\prod_{j = \lceil(\mu - m)/2\rceil}^{\mu - m - 1} (\mu - 1 - j)\right)\\
        &\leq \left(\prod_{j = 0}^{\lfloor(\mu - m - 1)/2\rfloor} (\mu - 1 - j)\right) \left(\prod_{j = 0}^{\lfloor(\mu - m - 1)/2\rfloor} (m + j)\right)\\
        &= \prod_{j = 0}^{\lfloor(\mu - m - 1)/2\rfloor} \big((\mu - 1 - j)(m + j)\big) \leq \left(\frac{\mu + m}{2}\right)^{\mu - m} \leq \left(\frac{\mu}{2}(1 + p^*)\right)^{\mu - m}.
    \end{align*}
    Substituting this into the first factor of \eqref{eq:probabilityQuotient}, we bound
    \begin{align*}
        &\frac{(\mu - 1)! (k - \mu + 1)!}{(m - 1)! (k - m + 1)!} \leq \left(\frac{\mu}{2}(1 + p^*)\right)^{\mu - m} \cdot \frac{(k - \mu + 1)!}{(k - m + 1)!}\\
        &\quad= \frac{\left(\frac{\mu}{2}(1 + p^*)\right)^{\mu - m}}{\prod_{j = 0}^{\mu - m - 1} (k - m + 1 - j)} = \prod_{j = 0}^{\mu - m - 1} \frac{\mu(1 + p^*)}{2(k - m + 1 - j)}.
    \end{align*}
    By noting that $k \widetilde{p} = \E[X] \geq \mu$, we bound the above estimate further:
    \begin{align*}
        \prod_{j = 0}^{\mu - m - 1} \frac{\mu(1 + p^*)}{2 (k - m + 1 - j)} &\leq \prod_{j = 0}^{\mu - m - 1} \frac{\mu(1 + p^*)}{2(\mu/\widetilde{p} - m + 1 - j)}\\
        &\leq \left(\frac{\mu(1 + p^*)}{2 \mu(1/\widetilde{p} - 1) + 2}\right)^{\mu - m} \leq \left(\frac{\widetilde{p}(1 + p^*)}{2(1 - \widetilde{p})}\right)^{\mu - m}.
    \end{align*}
    Substituting both bounds into \eqref{eq:probabilityQuotient} and recalling that $m = \mu p^*$, we obtain
    \begin{align*}
        \frac{\Pr[X = m - 1]}{\Pr[X = \mu - 1]} \leq \left(\frac{1 + p^*}{2}\right)^{\mu(1 - p^*)} = \exp\!\left(-\mu (1 - p^*) \ln\left(\frac{2}{1 + p^*}\right)\right).
    \end{align*}
    
    Finally, substituting this back into our bound of \eqref{eq:probabilityBoundCaseTwo}, using our assumption that $\mu \geq c \ln n$ and noting that $p^*$ is constant, choosing~$c$ sufficiently large, we obtain
    \begin{align*}
        \frac{\Pr[X < m]}{\Pr[X < \mu]} \leq n \exp\!\left(-\mu (1 - p^*) \ln\left(\frac{2}{1 + p^*}\right)\right) \leq n^{-2}.
    \end{align*}
    
    \textbf{Concluding the proof.} In both cases, we see that the number of $11$s in block~$i$ is at least $m = (1 - \delta)\widetilde{p}\mu \geq \big((1 - \delta)(1 - \varepsilon)^2/4\big) \mu$ with a probability of at least $1 - 4n^{-2}$. Since each $11$ contributes to the new values of $p_{2i - 1}$ and $p_{2i}$, after the update, both frequencies are at least $(1 - \delta)(1 - \varepsilon)^2/4$ with the same probability bound, as we claimed.
\end{proof}

\subsubsection*{Optimizing the critical block.}

Our next lemma considers the critical block $i \in [\frac{n}{2}]$ of an iteration~$t$. It shows that, with high probability, for all $j \in [2i]$, we have that $p^{(t + 1)}_j = 1 - \frac{1}{n}$. Informally, this means that (i) all frequencies left of the critical block remain at $1 - \frac{1}{n}$, and (ii) the frequencies of the critical block are increased to $1 - \frac{1}{n}$.

\begin{lemma}
    \label{lem:increasing_a_frequency}
    Let $\delta, \varepsilon, \zeta \in (0, 1)$ be constants and let $q = (1 - \delta)^2 (1 - \varepsilon)^4 / 16$. Consider the UMDA optimizing \dlb with $\lambda \geq (4/\zeta^2) \ln n$ and $\mu / \lambda \leq (1 - \zeta) q / e$, and consider an iteration~$t$ such that block $i \in [\frac{n}{2}]$ is critical and that $p^{(t)}_{2i - 1}$ and~$p^{(t)}_{2i}$ are at least $\sqrt{q}$. Then, with a probability of at least $1 - n^{-2}$, at least~$\mu$ offspring are generated with at least~$2i$ leading~$1$s. In other words, the selection-relevant block of iteration~$t$ is at a position in $[i + 1 .. \frac{n}{2}]$.
\end{lemma}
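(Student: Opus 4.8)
The plan is to show that the number of offspring with at least $2i$ leading ones exceeds $\mu$ with the claimed probability. The key quantity is $N$, the number of the $\lambda$ sampled individuals whose first $i$ blocks are all $11$. Since block $i$ is critical, every frequency at a position in $[2i-2]$ equals $1 - \frac{1}{n}$, and by hypothesis the two frequencies $p^{(t)}_{2i-1}$ and $p^{(t)}_{2i}$ of block $i$ are each at least $\sqrt q$. Because individuals are sampled independently and coordinate-wise independently, $N$ is a sum of $\lambda$ independent indicators, so $N \sim \mathrm{Bin}(\lambda, s)$ where $s$ is the probability that a single sample has $11$ in each of the first $i$ blocks.

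**First I would** bound the success probability $s$ from below. The prefix of $2(i-1)$ leading ones in blocks $1,\dots,i-1$ is produced with probability $\prod_{j=1}^{2i-2} p^{(t)}_j = (1 - \frac{1}{n})^{2i-2} \geq (1 - \frac{1}{n})^{n-2} \geq (1-\tfrac1n)^{n} \geq \frac{1}{e}$ (using $i \leq \frac{n}{2}$ and a standard estimate; one must check the constant, but $1/e$ is the right target). Block $i$ contributes a factor $p^{(t)}_{2i-1} p^{(t)}_{2i} \geq q$. Hence $s \geq q/e$, and therefore $\E[N] = \lambda s \geq \lambda q / e$.

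**Next I would** invoke the assumption $\mu/\lambda \leq (1-\zeta) q/e$, which rearranges to $\mu \leq (1-\zeta)\lambda q/e \leq (1-\zeta)\E[N]$. The goal $N \geq \mu$ then follows from the concentration statement $N \geq (1-\zeta)\E[N]$. This is exactly the regime of the Chernoff bound (\Cref{thm:chernoff}) with deviation parameter $\delta_{\mathrm{Ch}} = \zeta$: we get
\begin{align*}
    \Pr\!\big[N \leq (1-\zeta)\E[N]\big] \leq \exp\!\left(-\frac{\zeta^2 \E[N]}{2}\right) \leq \exp\!\left(-\frac{\zeta^2 \lambda q / e}{2}\right).
\end{align*}
The assumption $\lambda \geq (4/\zeta^2)\ln n$ together with $q/e$ being a positive constant should drive this tail probability below $n^{-2}$; here I would need to confirm that the constant $q/(2e) \geq \tfrac14$ (or otherwise absorb it), so that the exponent is at most $-\tfrac{\zeta^2}{2}\cdot\frac{4}{\zeta^2}\ln n = -2\ln n$. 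If $N \geq \mu$, then at least $\mu$ offspring carry $2i$ leading ones, so every such offspring has fitness at least $\dlb$-prefix $\geq i$, forcing the selection-relevant block to lie at a position strictly right of $i$, i.e., in $[i+1 .. \tfrac{n}{2}]$, which is the final assertion.

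**The main obstacle** I anticipate is purely a matter of getting the constants to line up cleanly rather than any conceptual difficulty: one must verify that the lower bound $(1-\tfrac1n)^{2i-2} \geq 1/e$ holds uniformly for all critical blocks $i \leq \tfrac{n}{2}$ (this is where $i \leq n/2$ is used crucially, as larger prefixes would erode the constant), and that the chain $\lambda \geq (4/\zeta^2)\ln n \Rightarrow \zeta^2 \E[N]/2 \geq 2\ln n$ actually closes given the precise value of $q$. A secondary subtlety is justifying independence: the indicators "$j$-th sample has $2i$ leading ones" are independent across the $\lambda$ samples because the UMDA draws the offspring independently from the \emph{same} frozen frequency vector $p^{(t)}$, so no conditioning on selection is needed for this lemma (in contrast to \Cref{lem:drop_of_a_selection_relevant_block}). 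With those checks in place, the single Chernoff application delivers the $1 - n^{-2}$ bound directly.
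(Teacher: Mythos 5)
Your proposal takes exactly the same route as the paper's proof: count the offspring $X$ having at least $2i$ leading ones, lower-bound $\E[X] \geq q\lambda/e$ via the critical-block structure and the hypothesis on the two block-$i$ frequencies, and apply \Cref{thm:chernoff} with deviation parameter $\zeta$, using $\mu \leq (1-\zeta)(q/e)\lambda \leq (1-\zeta)\E[X]$; the concluding deduction about the selection-relevant block is also the paper's. Your independence justification is correct and is indeed why no conditioning on selection is needed in this lemma.

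The obstacle you flag, however, is genuine and does not dissolve on closer inspection. A correct application of Chernoff yields a failure probability of $\exp(-\zeta^2\E[X]/2)$, and the only available lower bound on $\E[X]$ is $q\lambda/e$; since $q = (1-\delta)^2(1-\varepsilon)^4/16 < 1/16$, your hoped-for inequality $q/(2e) \geq 1/4$ fails badly. With $\lambda \geq (4/\zeta^2)\ln n$ the exponent is only guaranteed to be $(2q/e)\ln n$, giving a bound of $n^{-2q/e}$ with $2q/e < 1/(8e)$, which is far weaker than $n^{-2}$. Notably, the paper's own proof contains exactly this gap, only hidden: it asserts $\Pr[X \leq (1-\zeta)\E[X]] \leq e^{-\zeta^2\lambda/2}$, which silently replaces $\E[X]$ by $\lambda$ in the exponent; since $\E[X] \leq \lambda$, this is a strengthening, not a weakening, of what \Cref{thm:chernoff} provides, so the step does not follow. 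Moreover, because the Chernoff exponent is tight up to constant factors in this regime ($\E[X] = \Theta(\log n)$, constant relative deviation), the $1 - n^{-2}$ conclusion cannot be rescued by a cleverer argument when $\lambda$ sits at the stated lower bound; the clean repair is to require $\lambda \geq \bigl(4e/(q\zeta^2)\bigr)\ln n$ (or to weaken the probability bound to $1 - n^{-2q/e}$). This repair is harmless for \Cref{thm:UMDA_on_LO}, where $\lambda = \Omega(n\log n)$, so the corrected requirement holds with much room to spare. One further small slip in your sketch: $(1-\frac{1}{n})^n < 1/e$ for every finite $n$, so your chain should instead use the valid estimate $(1-\frac{1}{n})^{2i-2} \geq (1-\frac{1}{n})^{n-1} \geq 1/e$, which holds since $2i-2 \leq n-2$.
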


\begin{proof}
    Let~$X$ denote the number of individuals that have at least~$2i$ leading $1$s. Since block~$i$ is critical, each frequency at a position $j \in [2i - 2]$ is at $1 - \frac{1}{n}$. Thus, the probability that all of these frequencies sample a~$1$ for a single individual is $(1 - \frac{1}{n})^{2i - 2} \geq (1 - \frac{1}{n})^{n - 1} \geq 1/e$. Further, since the frequencies~$p^{(t)}_{2i - 1}$ and~$p^{(t)}_{2i}$ are at least $\sqrt{q}$, the probability to sample a $11$ at these positions is at least $q$. Hence, we have $\E[X] \geq q \lambda / e$.
    
    We now apply \Cref{thm:chernoff} to show that it is unlikely that fewer than~$\mu$ individuals from the current iteration have fewer than~$2i$ leading~$1$s. Using our bounds on~$\mu$ and~$\lambda$, we compute
    \begin{align*}
        \Pr[X < \mu] \leq \Pr\left[X \leq (1 - \zeta) \frac{q}{e} \lambda\right] \leq \Pr\big[X \leq (1 - \zeta) \E[X]\big] \leq e^{-\frac{\zeta^2 \lambda}{2 }} \leq n^{-2} .
    \end{align*}
    Thus, with a probability of at least $1 - n^{-2}$, at least~$\mu$ individuals have at least~$2i$ leading~$1$s. This concludes the proof.
\end{proof}

\subsubsection*{The run time of the UMDA on \dlb.}

We now prove our main result.

\begin{proof}[Proof of \Cref{thm:UMDA_on_LO}]
    We prove that the UMDA samples the optimum after $\frac{n}{2} + 2 e \ln n$ iterations with a probability of at least $1 - 9 n^{-1}$. Since it samples~$\lambda$ individuals each iteration, the theorem follows.
    
    Due to \Cref{lem:frequencies_do_not_drop_too_low} and $\mu \geq \cmu n \ln n$, within the first~$n$ iterations, with a probability of at least $1 - 2 n^{-1}$, no frequency drops below $(1 - \varepsilon)/2$ while its block has not been selection-relevant yet.
    
    By \Cref{lem:drop_of_a_selection_relevant_block}, since $\mu = \omega(\log n)$, with a probability of at least $1 - 4n^{-2}$, once a block becomes selection-relevant, its frequencies do not drop below $(1 - \delta)(1 - \varepsilon)^2/4$ for the next iteration. By a union bound, this does not fail for~$n$ consecutive times with a probability of at least $1 - 4n^{-1}$. Note that a selection-relevant block becomes critical in the next iteration.
    
    Consider a critical block $i \in [\frac{n}{2}]$. By \Cref{lem:increasing_a_frequency}, since $\lambda = \omega(\log n)$, with a probability of at least $1 - n^{-2}$, all frequencies at positions in $[2i]$ are immediately set to $1 - \frac{1}{n}$ in the next iteration, and the selection-relevant block has an index of at least $i + 1$, thus, moving to the right. Applying a union bound for the first~$n$ iterations of the UMDA and noting that each frequency belongs to a selection-relevant block at most once shows that all frequencies are at $1 - \frac{1}{n}$ after the first~$\frac{n}{2}$ iterations, since each block contains two frequencies, and stay there for at least $\frac{n}{2}$ additional iterations with a probability of at least $1 - 2 n^{-1}$.
    
    Consequently, after the first~$\frac{n}{2}$ iterations, the optimum is sampled in each iteration with a probability of $(1 - \frac{1}{n})^n \geq 1/(2e)$. Thus, after $2 e \ln n$ additional iterations, the optimum is sampled with a probability of at least $1 - \big(1 - 1/(2e)\big)^{2 e \ln n} \geq 1 - n^{-1}$.
    
    Overall, by applying a union bound over all failure probabilities above, the UMDA needs at most $\frac{n}{2} + 2 e \ln n$ iterations to sample the optimum with a probability of at least $1 - 9 n^{-1}$.
\end{proof}

\section{Run Time Analysis of the \oea}\label{sec:lb}

In order to see how well the \oea compares to the UMDA on \dlb, we prove a precise run time in the order of $\Omega(n^3)$ (see \Cref{thm:oneOneEAonDLB}).
As our result for the UMDA only proves a run time bound that holds with high probability, for a fairer comparison, we also prove that the \oea needs $\Omega(n^3)$ with overwhelming probability when optimizing \dlb (see \Cref{thm:eaWHPBound}).

Our bound of order $\Omega(n^3)$ shows that the corresponding $O(n^3)$ bound from \cite{LehreN19UMDAonDLB} is tight apart from constant factors and lower order terms. We note that \cite{LehreN19UMDAonDLB} have also shown upper bounds for the run time of other evolutionary algorithms, again typically of order $O(n^3)$. We conjecture that for these setting an $\Omega(n^3)$ lower bound is valid as well, as our empirical results in \Cref{sec:experiments} suggest, but we do not prove this here.

To make the following result precise, we recall that the \oea for the maximization of $f\colon \{0,1\}^n \to \R$ is the simple EA which (i)~starts with an individual $x$ chosen uniformly at random from $\{0,1\}^n$ and then (ii)~in each iteration creates from the current solution $x$ an offspring $y$ via flipping each bit independently with probability $\frac 1n$ and replaces $x$ by $y$ if and only if $f(y) \ge f(x)$.

The following result determines precisely the run time of the \oea on \dlb. The proof, not surprisingly, takes some ideas from the precise analyses of the run time of the \oea on \LO by \citet{BottcherDN10} and \citet{Doerr19tcs}.

\begin{theorem}
    \label{thm:oneOneEAonDLB}
    In expectation, the \oea samples the optimum of \dlb after
    \[\frac 14 n^2 \left(n - \frac 12\right) \frac{(1+\frac 1 {n-1})^n - 1}{1 + \frac{1}{2 (n-1)}} = (1 + o(1)) \frac{e-1}{4} n^3\]
    fitness function evaluations.
\end{theorem}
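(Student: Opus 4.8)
The plan is to mirror the classical precise analysis of the \oea on \LO, adapted to the block structure and the deceptive trap of \dlb. Throughout, call a block \emph{solved} if it equals $11$, let the \emph{frontier} $F$ be the index of the first non-solved block (so that the current fitness equals $2(F-1)+\db(\text{block }F)$), and call block $F$ the \emph{active} block. First I would record two structural facts. (i) An accepted mutation never shortens the solved prefix: flipping any $1$ in the prefix strictly decreases $\PREFIX$, and at most $\db \le 2$ can be regained in the newly exposed block, so the fitness strictly drops; hence $F$ is non-decreasing over time, and while $F=j$ the $2(j-1)$ prefix bits must be preserved for any improving step. (ii) When the active block is not $11$, the bits in blocks $j+1,\dots,n/2$ do not influence the fitness and are therefore invisible to selection.

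The crux is a \emph{uniformity invariant}: at every point in time, conditioned on the current frontier $F=j$ and on the value of block $j$, the bits in all later blocks are independent and uniformly distributed. I would prove this by induction over the iterations, exactly as in the free-rider analysis for \LO. Fact (ii) shows that selection ignores these bits, and bitwise mutation acts symmetrically on them, so uniformity is preserved while $F$ stays at $j$; when $F$ advances (block $j$ becomes $11$, possibly skipping already-solved ``free-rider'' blocks), the newly exposed blocks were uniform and independent of the rest, so the invariant is re-established at the new frontier. I expect this invariant --- in particular arguing that it persists at the random time when the frontier advances, and that conditioning on the skipped free-riders does not bias the remaining bits --- to be the main technical obstacle; everything downstream is bookkeeping.

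Given the invariant, each block is reached as an independent uniform pair in $\{00,01,10,11\}$ the first time the prefix to its left is complete. A block that is already $11$ is a free rider and costs no time (probability $\tfrac14$); otherwise it becomes active in a state uniform over $\{00,01,10\}$. I would then analyze the active block as a small Markov chain with the prefix frozen, which inserts the factor $(1-\tfrac1n)^{2(j-1)}$ into every relevant flip probability. From the single-zero state (the class $\{01,10\}$, which I abbreviate $Z$) the two fitness-changing moves --- repairing to $11$ or deceptively dropping to $00$ --- each occur with probability $\tfrac1n(1-\tfrac1n)^{2j-1}$, so the block is equally likely to be solved or to fall into the trap; from $00$ the only accepted escape is the simultaneous two-bit flip, of probability $\tfrac1{n^2}(1-\tfrac1n)^{2(j-1)}$. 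This yields the conditional expected solving times $T_{00}^{(j)} = n^2(1-\tfrac1n)^{-(2j-2)}$ and $T_{Z}^{(j)} = \tfrac n2 (1-\tfrac1n)^{-(2j-1)} + \tfrac12 T_{00}^{(j)}$.

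Finally I would assemble the answer by linearity of expectation over the disjoint time intervals on which $F=j$. Averaging the per-block time over the uniform initial state (weight $\tfrac14$ for $00$, weight $\tfrac12$ for $Z$, and weight $\tfrac14$ for the zero-cost free rider $11$) gives expected time $\tfrac12 T_{00}^{(j)} + \tfrac n4 (1-\tfrac1n)^{-(2j-1)}$ spent on block $j$, and summing $j$ from $1$ to $n/2$ reduces to two geometric series in $(1-\tfrac1n)^{-2}$. Evaluating them exactly, the $(2n-1)$ factors cancel and the two contributions combine into $\tfrac14 n^2(n-1)\big((1+\tfrac1{n-1})^n-1\big)$, which is the claimed closed form; since $(1+\tfrac1{n-1})^n \to e$ this is $(1+o(1))\tfrac{e-1}{4}n^3$. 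The only place where care beyond routine calculation is needed is matching the exact expression rather than merely its leading order: the lower-order term $\tfrac n4(1-\tfrac1n)^{-(2j-1)}$ coming from $T_Z^{(j)}$ must be retained, since it is precisely this term that supplies the ``$+1$'' turning $2(n-1)$ into $2n-1$ in the cancellation.
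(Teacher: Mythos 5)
Your proposal is correct and follows essentially the same route as the paper's proof: the same free-rider/uniformity argument for the bits beyond the frontier, the same per-block waiting times (your $T_{00}^{(j)}$ and $T_{Z}^{(j)}$ are exactly the paper's $E[X_{\ell,1}]$ and $E[X_{\ell,0}]$ with $\ell = j-1$), the same $\tfrac14,\tfrac12,\tfrac14$ weighting over the initial block state, and the same geometric-series evaluation, whose exact value $\tfrac14 n^2(n-1)\bigl((1+\tfrac1{n-1})^n-1\bigr)$ indeed equals the stated closed form. One small fix in your structural fact (i): the newly exposed block necessarily contains a $0$, so its \db-value is at most $1$ (not merely at most $2$); this is what makes the fitness drop strict, which matters because the \oea accepts offspring of equal fitness.
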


\begin{proof}
  Let $\ell \in [0..n/2-1]$ and $b \in \{0,1\}$. Let $x \in \{0,1\}^n$ such that $\prefix(x) = \ell$ and $\db(x_{2\ell+1,2\ell+2}) = b$, in other words, such that $x_i = 1$ for all $i \le 2\ell$ and the contribution of the $\ell+1$-st block is $\db(x_{2\ell+1,2\ell+2}) = b$. Consider a run of the \oea on \dlb, started with the search point $x$ instead of a random initial solution. Let $X$ be the random variable describing the first time that a solution with $\prefix$-value greater than $\ell$ is found. 
  
  We first observe that $X$ is independent of $x_i$, $i > 2\ell+2$. In the case that $b = 0$, also~$X$ is independent of whether $(x_{2\ell+1,2\ell+2}) = (1,0)$ or $(x_{2\ell+1,2\ell+2}) = (0,1)$. Hence we use the notation $X_{\ell,b} := X$ without ambiguity later in this proof. 
  
  We compute $E[X]$. If $b = 1$, then $X$ follows a geometric law with success probability $p = (1-\frac 1n)^{2\ell} n^{-2}$; hence 
  \[E[X_{\ell,1}] = \frac 1p = \left(1 - \frac 1n\right)^{-2\ell} n^2.\] 
  If $b=0$, in principle we could also precisely describe the distribution of $X$, but since this is more complicated and we only regard expected run times in this proof, we avoid this and take the more simple route to only determine the expectation. From $x$, the \oea in one iteration finds a search point with $\prefix$-value greater than $\ell$ with probability $(1-\frac 1n)^{2\ell+1} n^{-1}$, since it hast to flip the only~$0$ in the first $2\ell + 2$ positions to a~$1$. With the same probability, it finds a search point with (unchanged) $\prefix$-value equal to $\ell$ and (higher) $b$-value $1$. Otherwise it finds no true improvement and stays with $x$ or an equally good search point. In summary, we have 
  \begin{align*}
  E[X_{\ell,0}] = 1 &+ \left(1-\frac 1n\right)^{2\ell+1} n^{-1} \cdot 0 + \left(1-\frac 1n\right)^{2\ell+1} n^{-1} E[X_{\ell,1}]\\ 
  &+ \left(1-2\left(1-\frac 1n\right)^{2\ell+1} n^{-1}\right) E[X_{\ell,0}],
  \end{align*}
  hence 
  \begin{align*}
  E[X_{\ell,0}] &=  \frac 12 \left(1-\frac 1n\right)^{-2\ell-1} n + \frac 12 E[X_{\ell,1}] .
  \end{align*}  
  
	Let $T_\ell$ denote the run time when starting with a random solution $x$ with $\prefix(x) \ge \ell$, that is, such that the first $2\ell$ bits of $x$ are all~$1$s and the remaining bits are random. Note that when during the subsequent optimization the prefix value increases, then this results is a random search point with prefix value $\ell+1$. From the above, we know that
	\begin{align*}
	E[T_\ell] &= \tfrac 14 (E[X_{\ell,1}] + E[T_{\ell+1}]) + \tfrac 12 (E[X_{\ell,0}] + E[T_{\ell+1}]) + \tfrac 14 (E[T_{\ell+1}])\\
	& = E[T_{\ell+1}] + \tfrac 14 E[X_{\ell,1}] + \tfrac 12 E[X_{\ell,0}].
	\end{align*}
	Noting that $E[T_{n/2}] = 0$, we can write $E[T_\ell] = \sum_{j = \ell}^{n/2 - 1} (E[T_j] - E[T_{j+1}])$. In particular, the run time $T \coloneqq T_0$ of the \oea with random initialization satisfies
  \begin{align*}
      E[T] & = \sum_{\ell=0}^{n/2-1} \left(\frac 12 E[X_{\ell,0}] + \frac 14 E[X_{\ell,1}]\right)
      = \sum_{\ell=0}^{n/2-1} \left(\frac{1}{4} \left(1-\frac 1n\right)^{-2\ell-1} n + \frac{1}{2} E[X_{\ell,1}]\right) \\
      & = \sum_{\ell = 0}^{n/2-1} \frac 14 \left(1-\frac 1n\right)^{-2\ell-1} n + \sum_{\ell = 0}^{n/2-1} \frac 12 \left(1 - \frac 1n\right)^{-2\ell} n^2 .
  \end{align*} 
  Noting that $\sum_{\ell = 0}^{n/2-1} \left(1 - \frac 1n\right)^{-2\ell} = \frac 12 (n-1) \frac{(1+\frac 1 {n-1})^n - 1}{1 + \frac{1}{2 (n-1)}}$, we obtain
  \[E[T] = \frac 14 n^2 \left(n - \frac 1{2}\right) \frac{(1+\frac 1 {n-1})^n - 1}{1 + \frac{1}{2 (n-1)}}.\qedhere\]
\end{proof}

The result above on the expected run time does not rule out that the typical run time of the \oea is much better, e.g., that it is quadratic with high probability, we add a short proof showing that with overwhelming probability, the run time of the \oea on \dlb is at least cubic.
We note that with deeper methods, namely martingale concentration inequalities allowing unbounded martingale differences, such as the ones used by~\cite{FanGL15} or~\cite{Kotzing16}, one could show more precise statements including that already lower-order deviations from the expectation almost surely do not happen.
However, we feel that such a result, while surely interesting, is not important enough to justify the effort.
Therefore, we only show the following weaker statement.

\begin{theorem}
    \label{thm:eaWHPBound}
  The probability that the \oea samples the optimum of \dlb in at most $\frac{n^3}{16e}$ iterations, is at most $\exp(-\frac{n}{64e})$.
\end{theorem}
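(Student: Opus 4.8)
The plan is to exploit that \dlb is deceptive: each block, once it becomes the first non-$11$ block (the one that, together with the completed prefix, determines the fitness), tends to fall into the local optimum $00$, from which only a simultaneous flip of two designated bits can make progress. First I would record two facts about a run of the \oea on \dlb. Since turning a completed leading block away from $11$ strictly decreases the fitness, the value $\prefix(x)$ never decreases, and the \emph{active} block (block number $\prefix(x)+1$) is by definition never $11$. Secondly, I would analyze the transitions of the active block. If its value is $01$ or $10$ (so $\db = 0$), then in one step the probability of turning it into $11$ (increasing $\prefix$) equals the probability of turning it into $00$: both require flipping exactly one designated bit while keeping the $2\prefix(x)$ leading ones, i.e.\ both equal $\frac1n(1-\frac1n)^{2\prefix(x)+1}$, while every other step leaves the contribution at $0$. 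Hence, by a fair-coin argument, conditioned on the active block starting at contribution $0$ it reaches $00$ before being solved with probability exactly $\tfrac12$. Finally, from $00$ the only solving move flips the two designated $0$-bits simultaneously, so the per-step probability of solving an active $00$-block is at most $n^{-2}$.

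\textbf{Reduction to counting rare two-bit flips.} Call a block \emph{trapped} if it is ever the active block in state $00$; each trapped block must be solved by a step flipping two designated bits, an event of conditional probability at most $n^{-2}$ in every iteration. Since blocks are solved from left to right and each block is active during a disjoint time interval, the run time stochastically dominates a sum of $K$ geometric waiting times with success probability $n^{-2}$, where $K$ is the number of trapped blocks. Equivalently, if the optimum is found within $t \coloneqq \frac{n^3}{16e}$ iterations, then at least $K$ of these iterations must be successful two-bit-flip solves, and the number of such successes is stochastically at most $\mathrm{Bin}(t, n^{-2})$, whose expectation is only $\frac{n}{16e}$.

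\textbf{Lower bound on the number of trapped blocks.} It remains to show that $K$ is linear in $n$ with overwhelming probability. Let $R$ be the number of blocks that ever become active. By the Markov property the fair coins from the first step are independent across the blocks that become active at contribution $0$, and blocks that become active already in state $00$ are trapped with certainty; hence $K$ stochastically dominates $\mathrm{Bin}(R, \tfrac12)$, so $K \ge R/4$ with probability $1 - \exp(-\Omega(R))$. \emph{The main obstacle is to lower-bound $R$:} a block fails to become active only if it is skipped, i.e.\ it already equals $11$ at the moment $\prefix$ jumps past it. Because blocks to the right of the active one receive no selection signal, they perform a near-neutral random walk, and I would argue that such a not-yet-active block equals $11$ at the moment it is reached only with probability bounded away from~$1$, whence a Chernoff estimate gives $R \ge cn$ for a constant $c>0$ with failure probability $\exp(-\Omega(n))$. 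Making this drift argument rigorous — controlling the distribution of a block that has drifted for a long, random time and showing that skips are not too frequent — is the delicate part of the proof.

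\textbf{Concentration.} Given $K \ge cn$, I would finish with \Cref{lem:binExceedingExpectedValue}: the probability that $\mathrm{Bin}(t,n^{-2})$, of mean $\frac{n}{16e}$, reaches the much larger value $K = \Omega(n)$ is exponentially small in~$n$. Collecting the (at most two) exponentially small failure terms and tuning the constant $c$ together with the deviations in the Chernoff estimates then yields the stated bound $\exp\!\big(-\tfrac{n}{64e}\big)$ on the probability of optimizing \dlb within $\frac{n^3}{16e}$ iterations.
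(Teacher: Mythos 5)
There is a genuine gap, and it is exactly the one you flag yourself: your lower bound on the number of blocks that ever become active (your $R$), and hence on the number of trapped blocks $K$, is left as an unproven ``delicate drift argument.'' This is not a minor technicality -- it is the structural heart of the proof, and the drift framing is the wrong tool for it. The missing idea is the standard \LO-style uniformity observation: the fitness of a search point under \dlb is determined by its bits up to and including the active block, so bits to the right of the active block never influence acceptance; they are uniformly distributed at initialization, and independent bit-wise mutation preserves uniformity. Hence, at the first time $t_i$ at which a search point with $\prefix$-value at least $i$ is generated, the bits in positions $[2i+1 .. n]$ are independent and uniformly distributed. In particular, block $i+1$ equals $00$ at that moment with probability exactly $\tfrac14$, independently of everything that happened before. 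This single observation dissolves your entire third paragraph: there is no need to control ``a block that has drifted for a long, random time,'' no need to bound the frequency of skips, and in fact no need for your fair-coin argument or for $R$ at all -- blocks that are skipped or reached at contribution $0$ can simply be given away.

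This is how the paper proceeds: with $d_i \coloneqq t_{i+1} - t_i$, the uniformity fact gives that, regardless of $d_0, \dots, d_{i-1}$, with probability at least $\tfrac14$ the new active block is $00$, in which case progress requires flipping its two bits while keeping the prefix, an event of probability $(1-\tfrac1n)^{2i} n^{-2} \le n^{-2}$ per iteration, so that $d_i \ge n^2$ with probability at least $(1-n^{-2})^{n^2-1} \ge 1/e$. Thus each indicator $X_i$ of $\{d_i \ge n^2\}$ is (conditionally) $1$ with probability at least $1/(4e)$, and a Chernoff bound on $X = \sum_{i=0}^{n/2-1} X_i$ yields $\Pr[X \le \tfrac{n}{16e}] \le \exp(-\tfrac{n}{64e})$; since $X > \tfrac{n}{16e}$ forces $T \ge X n^2 > \tfrac{n^3}{16e}$, the theorem follows. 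The other pieces of your proposal (the fair-coin symmetry between reaching $11$ and $00$ from contribution $0$, and the domination of the number of two-bit-flip solves in $t$ iterations by $\mathrm{Bin}(t, n^{-2})$) are correct and could be assembled into a valid alternative proof once the uniformity fact is in place -- indeed it would give the stronger trapping probability $\tfrac14 + \tfrac12 \cdot \tfrac12 = \tfrac12$ per block -- but as written the argument is incomplete at its key step, and your final constant-chasing (getting exactly $\exp(-\tfrac{n}{64e})$ from ``tuning $c$'') is asserted rather than carried out.
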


\begin{proof}
  Consider a run of the \oea on \dlb. Let $(x_t)_{t \in \N}$ be the sequence of search points generated in this run, that is, $x_0$ is the random initial search point and $x_t$ is generated in iteration $t$ by mutating the current-best search point (breaking ties towards later generated ones). Let $t_i \coloneqq \min\{t \in \N \mid \prefix(x_t) \ge i\}$ be the first time a search point with prefix value at least $i$ is generated. Note that $T \coloneqq t_{n/2}$ is the run time of the algorithm (when we do not count the evaluation of the initial individual) and $t_0 = 0$. 
	
	Similar to what is well-known about the optimization of the \LO benchmark, we observe that if $t = t_i$, then $x_t$ is a search point with $2i$ leading~$1$s and the remaining bits independently and uniformly distributed in $\{0,1\}$. Consequently, $d_i \coloneqq t_{i+1} - t_i$ is independent of $t_0, \dots, t_i$. We note that with probability $1/4$, the search point $x_t$ has~$0$s in positions $2i+1$ and $2i+2$. In this case, the parent individual in the remaining run of the algorithm agrees with this $x_t$ in the first $2i+2$ bit positions until in iteration $t_{i+1}$ a better solution is found. Such a better solution is found with probability $p_i = (1-1/n)^{2i} n^{-2} \le n^{-2}$. With probability $(1-n^{-2})^{n^2-1} \ge 1/e$, this takes at least $n^2$ iterations. In summary, we see that, regardless of $d_0, \dots, d_{i-1}$, we have that $d_i$ is at least $n^2$ with probability at least $1/(4e)$. This conditional independence is enough to apply Chernoff-type concentration inequalities on the indicator random variables $X_i$ of the events $\{d_i \ge n^2\}$, see Lemma~11 in~\cite{DoerrJ10} or Section~1.10.2 in~\cite{Doerr20bookchapter}. Consequently, for $X = \sum_{i = 0}^{n/2 -1} X_i$ the multiplicative Chernoff bound (Theorem~\ref{thm:chernoff}) gives 
	\[
    	\Pr[X \le \tfrac n {16e}] = \Pr \big[X \le (1 - \tfrac 12) E[X]\big] \le \exp\big(- \tfrac 12 (\tfrac 12)^2 \tfrac{n}{8e} \big) = \exp(-\tfrac{n}{64e}).
	\]
	Since $X > n/(16e)$ implies that the run time $T = \sum_{i=0}^{n/2-1} d_i$ is larger than $n^3/(16e)$, we have shown our claim.
\end{proof}

\section{Experiments}
\label{sec:experiments}

In their paper, \cite{LehreN19UMDAonDLB} analyze the run time of many EAs on \dlb. For an optimal choice of parameters, they prove an expected run time of $O(n^3)$ for all considered algorithms.

Since these are only upper bounds and since we showed in \Cref{sec:lb} an $\Omega(n^3)$ lower bound only for the \oea, it is not clear how well the other algorithms actually perform against the UMDA, which has a run time in the order of $n^2 \ln n$ (\Cref{thm:UMDA_on_LO}) for appropriate parameters. Thus, we provide some empirical results in \Cref{fig:experiments} on how well these algorithms compare against each other.

\paragraph{Considered algorithms.} Besides the UMDA, we analyze the run time of the following EAs: the \oea, the $(\mu, \lambda)$~EA, and the $(\mu, \lambda)$~GA with uniform crossover, which are most of the EAs that \cite{LehreN19UMDAonDLB} consider in their paper. Lehre and Nguyen also analyze the $(1 + \lambda)$~EA and the $(\mu + 1)$~EA. However, in our preliminary experiments with $\mu = \lceil\ln n\rceil$ and $\lambda = \lceil\sqrt{n}\rceil$, they were always slower than the \oea, so we do not include these algorithms in \Cref{fig:experiments}.

Further, we also depict the run time of the \emph{mutual-information-maximizing input clustering} algorithm (MIMIC; \citealp{BonetIV96MIMIC}), which is one of the algorithms that \cite{LehreN19UMDAonDLB} also analyze empirically. The MIMIC is an EDA with a more sophisticated probabilistic model than the UMDA. This model is capable of capturing dependencies among the bit positions by storing a permutation~$\pi$ of indices and conditional probabilities. An individual is created bit by bit, following~$\pi$. Each bit is sampled with respect to a conditional probability, depending on the bit sampled in the prior position. The MIMIC updates its model as follow: similar to the UMDA, the MIMIC samples each iteration~$\lambda$ individuals and selects the~$\mu$ best. To update its richer probabilistic model, the MIMIC then searches for the position with the least (empirical) entropy, that is, the position that has the most~$0$s or~$1$s, and sets the new frequency equal to the number of $1$s in the selected subpopulation. Iteratively, for all remaining positions, it determines the position with the lowest conditional (empirical) entropy and sets the conditional frequencies to the conditional numbers of $1$s, where conditional is always with respect to the prior position in~$\pi$. Since a precise description of this algorithm would need a significant amount of space, we refer the reader to a recent paper by \cite{DoerrK20MIMIConEBOM} for more details.

\paragraph{Parameter choice.} For each of the EAs depicted in \Cref{fig:experiments}, we choose parameters such that the upper run time bound proven by \cite{LehreN19UMDAonDLB} is $O(n^3)$, that is, optimal. For both the $(\mu, \lambda)$~EA and the $(\mu, \lambda)$~GA, we choose $\mu = \lceil\ln n\rceil$ and $\lambda = 9 \mu$. For the latter, we further choose uniform crossover and, in each iteration, execute it with probability~$1/2$. Our parameter choices for the $(\mu, \lambda)$~EA stem from aiming for a constant ratio $\mu/\lambda$ as well as from the constraints $\lambda \geq c \ln n$ (for a sufficiently large constant~$c$) and $\lambda \geq (1 + \delta) e^2 \mu$ as stated by \cite{LehreN19UMDAonDLB}, choosing $\delta = 0.1$. For the $(\mu, \lambda)$~GA, we choose the same values for~$\mu$ and~$\lambda$, as the algorithm uses the same selection mechanism as the $(\mu, \lambda)$~EA and as the values comply with the constraints stated by \cite{LehreN19UMDAonDLB}. The crossover probability of~$1/2$ is chosen as it is a constant probability that statisfies the constraint to be at most $1 - (1 + \delta)e\mu/\lambda$, where we choose $\delta = 0.1$ as before.

For the UMDA and the MIMIC, we choose $\mu = \lceil3 n \ln n\rceil$ and $\lambda = 12 \mu$. Our parameter choice for the UMDA is based on a constant ratio $\lambda/\mu$ and on a preliminary search for the first integer factors for~$\mu$ and~$\lambda$ such that the UMDA is successful in a reasonable time. For the MIMIC, we speculate that the similarities with the UMDA imply that these parameter values are suitable as well. In any case, with these parameter values all runs of these two algorithms were successful, that is, the optimum of \dlb was found within $10n^3$ function evaluations.

\paragraph{Results.} When interpreting the plots from \Cref{fig:experiments}, please note that both axes use a logarithmic scale. Thus, any polynomial is depicted as a linear function, where the slope depends on the exponent of the highest term. \Cref{fig:experiments} clearly shows two separate regimes: The EAs with a larger slope and the EDAs with a smaller slope.

We see that the \oea from $n \ge 200$ on performs the worst out of all algorithms. The EAs all have a very similar slope, which ranges, according to a regression to fit a power law, from $\Theta(n^{2.9967})$ to $\Theta(n^{3.1})$. This indicated that all of these EAs have a run time of $\Theta(n^3)$, as we proved for the \oea in \Cref{thm:oneOneEAonDLB}. In contrast, the UMDA and the MIMIC have a smaller slope, about $\Theta(n^{2.2957})$ and $\Theta(n^{2.3096})$, respectively, suggesting that our upper bound of $O(n^2 \log n)$ for the UMDA (\Cref{thm:UMDA_on_LO}) is close to its true run time. Further note that the variance of the EDAs is very small, suggesting that their run time holds with a higher concentration bound than what we prove for the UMDA in \Cref{thm:UMDA_on_LO}.

Interestingly, the MIMIC behaves very similarly to the UMDA. However, this may be a result of the same parameter choice of both algorithms. \cite{LehreN19UMDAonDLB} also analyze the MIMIC empirically, with choices of~$\mu$ in the orders of~$\sqrt{n}$, $\sqrt{n}\log n$, and~$n$. In their results, the run time of the MIMIC is slightly faster; for $n = 100$, the median of the empirical run time of the MIMIC as reported by \cite{LehreN19UMDAonDLB} is in the order of $2^{19} \approx 5.2 \cdot 10^5$. In \Cref{fig:experiments}, for the same value of~$n$, the run time of the MIMIC is about $6 \cdot 10^5$, which is very close. This suggests that the parameter regime without strong genetic drift, which we consider in this paper for the UMDA, is not that important for the MIMIC. While the MIMIC seems to have a larger tolerable parameter regime, it is impressive that the UMDA has a similar performance to the MIMIC.\footnote{It is important to note that we only report the number of fitness evaluations in \Cref{fig:experiments}. When comparing the time to perform an update, the UMDA is faster, as the model update can be computed in time $\Theta(n)$, whereas the one for the MIMIC takes time $\Theta(n^2)$ due to the iterated search of the minimal conditional entropy.}

\begin{figure}
    \centering
    \includegraphics[width = 12 cm]{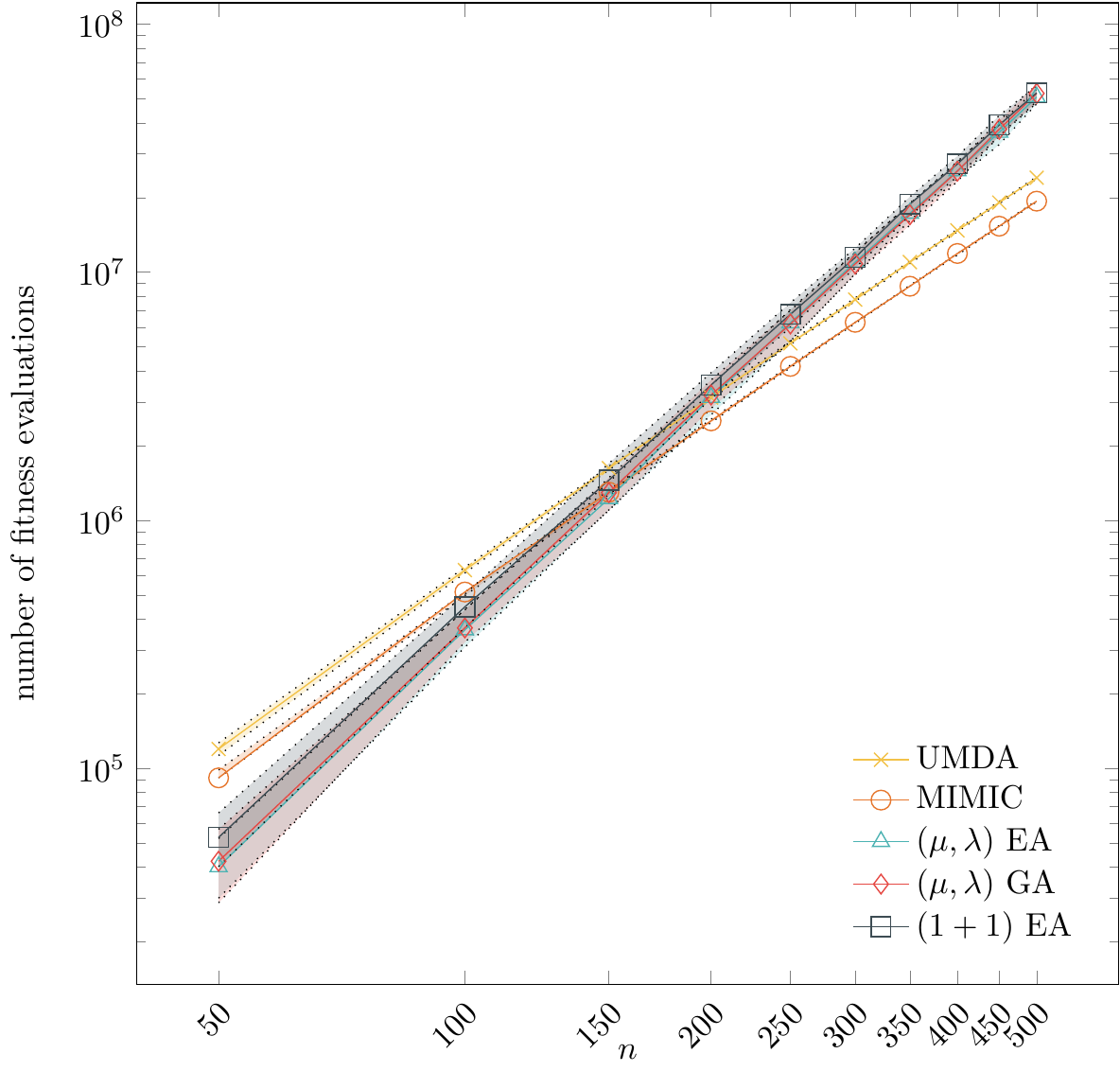}
    \caption{A log-log plot depicting the number of fitness evaluations of various algorithms until the optimum of \dlb is sampled for the first time against the input size~$n$ from~$50$ to~$300$ in steps of~$50$. For each value of~$n$, $100$ independent runs were started per algorithm. The lines depict the median of the~$50$ runs of an algorithm, and the shaded areas denote the center $50$\,\%.\newline
        Please refer to \Cref{sec:experiments} for more details.
    }
    \label{fig:experiments}
\end{figure}

\subsection{Different Regimes of the UMDA's Population Size}
\label{sec:variableMu}

\Cref{fig:experiments} compares the run time of the UMDA to that of other algorithms. To this end, we chose the parameters in the regime without strong genetic drift, as proposed in \Cref{thm:UMDA_on_LO}. However, \cite{LehreN19UMDAonDLB} prove a lower bound of $\exp\big(\Omega(\mu)\big)$ for the UMDA on \dlb, which can be lower than our upper bound of $O(n^2 \log n)$ if~$\mu$ is sufficiently small. Thus, in \Cref{fig:variableMu}, we analyze the impact of~$\mu$ on the run time of the UMDA on \dlb for $n = 300$.

\paragraph{Parameter choice.} In each run, we choose $\lambda = 12 \mu$, as we did before; $\mu$ ranges from $2^1$ to $2^{12}$ in powers of~$2$. Note that our range for~$\mu$ ends with $2^{12} = 4{,}096$, which is less than our choice of $\mu = \lceil3 n \ln n\rceil$ in the previous experiment, which results in a value of $\mu = 5{,}134$.

\paragraph{Results.} In \Cref{fig:variableMu}, we show the run times for those values of $\mu$ where at least one out of 100 runs was successful, that is, where the UMDA found the optimum of \dlb within $10n^3$ function evaluations. There are two very distinct ranges of $\mu$ leading to successful runs. The first regime consists of the values~$2$, $4$, and~$8$; note that the effect of genetic drift is strong in this regime. For each value of~$\mu$, the number of fitness evaluations is at least $10^7$, which is worse than the performance of the UMDA in \Cref{fig:experiments}. Interestingly, in this regime the UMDA is successful in every run. This suggests that the range of frequency values for the UMDA is coarse enough such that it is able to quickly change frequencies from~$\frac{1}{n}$ to $1 - \frac{1}{n}$. If a frequency is at~$\frac{1}{n}$, it takes some time until a~$1$ is sampled and selected, but if this occurs, the respective frequency has a decent chance of being increased to $1 - \frac{1}{n}$.

For values of~$\mu$ between $2^4$ to $2^9$, the range of frequency values increases, consequently leading to longer times for a frequency to be increased from~$\frac{1}{n}$ to $1 - \frac{1}{n}$. The growth of the plot from the regime of smaller values of~$\mu$ suggests a superpolynomial run time in~$\mu$ for this behavior. Thus, it is not surprising that no run of the UMDA is successful for the medium values of~$\mu$.

The second successful regime of~$\mu$ consists of the values~$2^{10}$, $2^{11}$, and~$2^{12}$. While not all runs are successful, in particular not those for the smallest value of $\mu$, the trend of the curve seems to be linear, which suggests a polynomial run time of the UMDA (because of the log-log scale). Further, the run time is comparable to that of the UMDA seen in \Cref{fig:experiments}. Likely, starting with $\mu = 2^{10}$, the UMDA enters the regime where the effect of genetic drift starts to diminish. For $\mu = 2^{10}$, the effect is still rather large, and oftentimes frequencies reach the lower border~$\frac{1}{n}$, however, not all of the time. With increasing~$\mu$, this happens less and less. For $\mu = 2^{12}$, almost all runs are successful. In comparison, in \Cref{fig:experiments}, \emph{all} runs of the UMDA were successful, which uses a slightly larger value of~$\mu$ than~$2^{12}$.

Overall, the results from \Cref{fig:variableMu} show an interesting transition from the run time of the UMDA on \dlb when the effect of genetic drift vanishes.

\begin{figure}
    \centering
    \includegraphics[width = 12 cm]{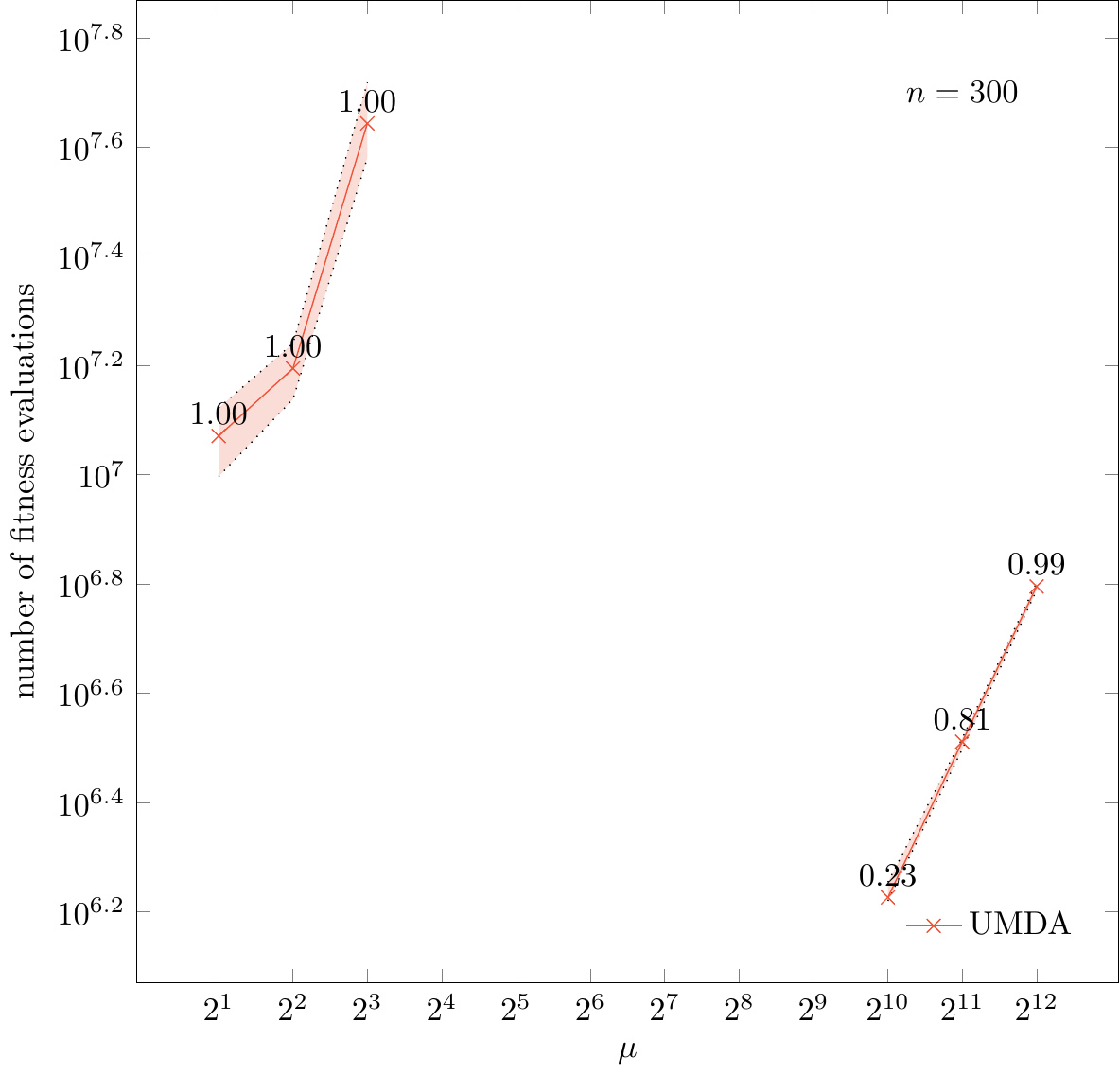}
    \caption{A log-log plot depicting the number of fitness evaluations of the UMDA until the optimum of \dlb for $n = 300$ is sampled for the first time against the parameter~$\mu$ from~$2^1$ to~$2^{12}$, doubling the value in each step. For each value of~$n$, $100$ independent runs were started. We terminated each run after $10 n^3 = 2.7 \cdot 10^{8}$ function evaluations if the UMDA did not find the optimum until then.
        \newline
        The number above each data point denotes the ratio of runs in which the UMDA actually found an optimum. The lines depict the median of the \emph{successful} runs, and the shaded areas denote their center $50$\,\%.
        \newline
        Note that for the values of~$\mu$ from $2^4$ to $2^9$, no run was successful. Thus, no data points are depicted.
        \newline
        Please refer to \Cref{sec:variableMu} for more details.
    }
    \label{fig:variableMu}
\end{figure}

\section{Conclusion}
\label{sec:conclusion}

We conducted a rigorous run time analysis of the UMDA on the \DLB function. In particular, it shows that the algorithm with the right parameter choice finds the optimum in $O(n^2 \log n)$ fitness evaluations with high probability (\Cref{thm:UMDA_on_LO}). This result shows that the lower bound by \cite{LehreN19UMDAonDLB}, which is exponential in~$\mu$, is not due to the UMDA being ill-suited for coping with epistasis and deception, but rather due to an unfortunate choice of the algorithm's parameters. For several EAs, \cite{LehreN19UMDAonDLB} showed a run time bound of $O(n^3)$ on \dlb. We proved a matching lower bound for the \oea (\Cref{sec:lb}) and conducted experiments which suggest that also other EAs perform worse than the UMDA on \dlb (\Cref{sec:experiments}). In this light, our result suggests that the UMDA can handle epistasis and deception even better than many evolutionary algorithms and that the UMDA does so similar to a more complex EDA.

Our run time analysis holds for parameter regimes that prevent genetic drift. When comparing our run time with the one shown by~\cite{LehreN19UMDAonDLB}, we obtain a strong suggestion for running EDAs in regimes of low genetic drift. In contrast to the work of \cite{LenglerSW18cGAmediumStepSizes} that indicates moderate performance losses due to genetic drift, here we obtain the first fully rigorous proof of such a performance loss, and in addition one that is close to exponential in $n$ (the $\exp(\Omega(\mu))$ lower bound of~\cite{LehreN19UMDAonDLB} holds for $\mu$ up to $o(n)$). Our proven upper and lower bound also show that the UMDA has an advantage in coping with local optima compared to EAs. Such an observation has previously only been made for the compact genetic algorithm (when optimizing jump functions, see \cite{HasenoehrlS18cGAonJump,Doerr19cGAonJump}).

On the technical side, our result indicates that the regime of low genetic drift admits relatively simple and natural analyses of run times of EDAs, in contrast, e.g., to the level-based methods previously used in comparable analyses, e.g., by~\cite{DangL15UMDAonLO} and \cite{LehreN19UMDAonDLB}.

We conjecture that our result can be generalized to a version of the \dlb function with a block size of $k \leq n$.

\subsection*{Acknowledgments}

We thank the anonymous reviewers of this paper, who provided valuable feedback that improved the paper in various aspects.

This work was supported by COST Action CA15140 and by  a public grant as part of the Investissements d'avenir project, reference ANR-11-LABX-0056-LMH, LabEx LMH, in a joint call with Gaspard Monge Program for optimization, operations research and their interactions with data sciences.

\bibliographystyle{apalike}
\bibliography{references}

\begin{thebibliography}{}

\bibitem[B{\"o}ttcher et~al., 2010]{BottcherDN10}
B{\"o}ttcher, S., Doerr, B., and Neumann, F. (2010).
\newblock Optimal fixed and adaptive mutation rates for the {L}eading{O}nes
  problem.
\newblock In {\em Proc. of PPSN '10}, pages 1--10. Springer.

\bibitem[Chen et~al., 2009]{ChenLTY09UMDAonSubstring}
Chen, T., Lehre, P.~K., Tang, K., and Yao, X. (2009).
\newblock When is an estimation of distribution algorithm better than an
  evolutionary algorithm?
\newblock In {\em Proc.~of CEC~'09}, pages 1470--1477.

\bibitem[Corus et~al., 2018]{CorusDEL18}
Corus, D., Dang, D., Eremeev, A.~V., and Lehre, P.~K. (2018).
\newblock Level-based analysis of genetic algorithms and other search
  processes.
\newblock {\em {IEEE} Transactions on Evolutionary Computation},
  22(5):707--719.

\bibitem[Dang et~al., 2016]{DangFKKLOSS16EAonJumpDiversityMechanisms}
Dang, D., Friedrich, T., Kötzing, T., Krejca, M.~S., Lehre, P.~K., Oliveto,
  P.~S., Sudholt, D., and Sutton, A.~M. (2016).
\newblock Escaping local optima with diversity mechanisms and crossover.
\newblock In {\em Proc.~of GECCO~'16}, pages 645--652.

\bibitem[Dang et~al., 2018]{DangFKKLOSS18EAonJumpWithCrossover}
Dang, D., Friedrich, T., Kötzing, T., Krejca, M.~S., Lehre, P.~K., Oliveto,
  P.~S., Sudholt, D., and Sutton, A.~M. (2018).
\newblock Escaping local optima using crossover with emergent diversity.
\newblock {\em {IEEE Transactions on Evolutionary Computation}},
  22(3):484--497.

\bibitem[Dang and Lehre, 2015]{DangL15UMDAonLO}
Dang, D. and Lehre, P.~K. (2015).
\newblock Simplified runtime analysis of estimation of distribution algorithms.
\newblock In {\em Proc.~of {GECCO}~'15}, pages 513--518.

\bibitem[Dang and Lehre, 2016]{DangL16algo}
Dang, D. and Lehre, P.~K. (2016).
\newblock Runtime analysis of non-elitist populations: from classical
  optimisation to partial information.
\newblock {\em Algorithmica}, 75(3):428--461.

\bibitem[{De Bonet} et~al., 1996]{BonetIV96MIMIC}
{De Bonet}, J.~S., {Isbell, Jr.}, C.~L., and Viola, P.~A. (1996).
\newblock {MIMIC:} finding optima by estimating probability densities.
\newblock In {\em Proc.~of NIPS~'96}, pages 424--430.

\bibitem[Doerr, 2019a]{Doerr19tcs}
Doerr, B. (2019a).
\newblock Analyzing randomized search heuristics via stochastic domination.
\newblock {\em Theoretical Computer Science}, 773:115--137.

\bibitem[Doerr, 2019b]{Doerr19cGAonJump}
Doerr, B. (2019b).
\newblock A tight runtime analysis for the {cGA} on jump functions: {EDA}s can
  cross fitness valleys at no extra cost.
\newblock In {\em Proc.~of {GECCO}~'19}, pages 1488--1496.

\bibitem[Doerr, 2020a]{Doerr20gecco}
Doerr, B. (2020a).
\newblock Does comma selection help to cope with local optima?
\newblock In {\em Proc. of GECCO '20}, pages 1304--1313.

\bibitem[Doerr, 2020b]{Doerr20bookchapter}
Doerr, B. (2020b).
\newblock Probabilistic tools for the analysis of randomized optimization
  heuristics.
\newblock In {\em Theory of Evolutionary Computation: Recent Developments in
  Discrete Optimization}, pages 1--87. Springer.
\newblock Also available at \url{https://arxiv.org/abs/1801.06733}.

\bibitem[Doerr and Johannsen, 2010]{DoerrJ10}
Doerr, B. and Johannsen, D. (2010).
\newblock Edge-based representation beats vertex-based representation in
  shortest path problems.
\newblock In {\em Genetic and Evolutionary Computation Conference, GECCO 2010},
  pages 759--766. ACM.

\bibitem[Doerr and Künnemann, 2015]{DoerrK15LambdaEAonLinearFunctions}
Doerr, B. and Künnemann, M. (2015).
\newblock Optimizing linear functions with the (1+{\(\lambda\)}) evolutionary
  algorithm -- {Different} asymptotic runtimes for different instances.
\newblock {\em Theoretical Computer Science}, 561:3--23.

\bibitem[Doerr and K{\"{o}}tzing, 2019]{DoerrK19}
Doerr, B. and K{\"{o}}tzing, T. (2019).
\newblock Multiplicative up-drift.
\newblock In {\em Proc. of GECCO '19}, pages 1470--1478.

\bibitem[Doerr and Krejca, 2018]{DoerrK18sigcGA}
Doerr, B. and Krejca, M.~S. (2018).
\newblock Significance-based estimation-of-distribution algorithms.
\newblock In {\em Proc.~of GECCO~'18}, pages 1483--1490.

\bibitem[Doerr and Krejca, 2020a]{DoerrK20MIMIConEBOM}
Doerr, B. and Krejca, M.~S. (2020a).
\newblock Bivariate estimation-of-distribution algorithms can find an
  exponential number of optima.
\newblock In {\em Proc. of GECCO '20}, pages 796--804.

\bibitem[Doerr and Krejca, 2020b]{DoerrK20UMDAonDLB}
Doerr, B. and Krejca, M.~S. (2020b).
\newblock The univariate marginal distribution algorithm copes well with
  deception and epistasis.
\newblock In {\em Proc. of EvoCOP '20}, pages 51--66.

\bibitem[Doerr and Zheng, 2020]{DoerrZ20tec}
Doerr, B. and Zheng, W. (2020).
\newblock Sharp bounds for genetic drift in estimation-of-distribution
  algorithms.
\newblock {\em IEEE Transactions on Evolutionary Computation}.
\newblock To appear.

\bibitem[Droste, 2006]{Droste06cGAonLinearFunctions}
Droste, S. (2006).
\newblock A rigorous analysis of the compact genetic algorithm for linear
  functions.
\newblock {\em Natural Computing}, 5(3):257--283.

\bibitem[Droste et~al., 2002]{DrosteJW02OnePlusOneEAonJump}
Droste, S., Jansen, T., and Wegener, I. (2002).
\newblock On the analysis of the {(1+1)} evolutionary algorithm.
\newblock {\em Theoretical Computer Science}, 276(1-2):51--81.

\bibitem[Fan et~al., 2015]{FanGL15}
Fan, X., Grama, I., and Liu, Q. (2015).
\newblock Exponential inequalities for martingales with applications.
\newblock {\em Electronic Journal of Probability}, 20:1--22.

\bibitem[Hasenöhrl and Sutton, 2018]{HasenoehrlS18cGAonJump}
Hasenöhrl, V. and Sutton, A.~M. (2018).
\newblock On the runtime dynamics of the compact genetic algorithm on jump
  functions.
\newblock In {\em Proc.~of GECCO~'18}, pages 967--974.

\bibitem[Hoeffding, 1963]{Hoeffding63ChernoffBound}
Hoeffding, W. (1963).
\newblock Probability inequalities for sums of bounded random variables.
\newblock {\em Journal of the American Statistical Association},
  58(301):13--30.

\bibitem[K{\"{o}}tzing, 2016]{Kotzing16}
K{\"{o}}tzing, T. (2016).
\newblock Concentration of first hitting times under additive drift.
\newblock {\em Algorithmica}, 75:490--506.

\bibitem[Krejca and Witt, 2020a]{KrejcaW20UMDAlowerBoundOneMax}
Krejca, M.~S. and Witt, C. (2020a).
\newblock Lower bounds on the run time of the univariate marginal distribution
  algorithm on {OneMax}.
\newblock {\em Theoretical Computer Science}, 832:143--165.

\bibitem[Krejca and Witt, 2020b]{KrejcaW18EDAoverview}
Krejca, M.~S. and Witt, C. (2020b).
\newblock Theory of estimation-of-distribution algorithms.
\newblock In {\em Theory of Evolutionary Computation: Recent Developments in
  Discrete Optimization}, pages 405--442. Springer.
\newblock Also available at \url{http://arxiv.org/abs/1806.05392}.

\bibitem[Lehre, 2011]{Lehre11}
Lehre, P.~K. (2011).
\newblock Fitness-levels for non-elitist populations.
\newblock In {\em Proc. of {GECCO} '11}, pages 2075--2082.

\bibitem[Lehre and Nguyen, 2017]{LehreN17UMDAonOneMax}
Lehre, P.~K. and Nguyen, P. T.~H. (2017).
\newblock Improved runtime bounds for the univariate marginal distribution
  algorithm via anti-concentration.
\newblock In {\em Proc.~of {GECCO}~'17}, pages 1383--1390.

\bibitem[Lehre and Nguyen, 2019]{LehreN19UMDAonDLB}
Lehre, P.~K. and Nguyen, P. T.~H. (2019).
\newblock On the limitations of the univariate marginal distribution algorithm
  to deception and where bivariate {EDA}s might help.
\newblock In {\em Proc.~of {FOGA}~'19}, pages 154--168.

\bibitem[Lengler et~al., 2018]{LenglerSW18cGAmediumStepSizes}
Lengler, J., Sudholt, D., and Witt, C. (2018).
\newblock Medium step sizes are harmful for the compact genetic algorithm.
\newblock In {\em Proc.~of GECCO~'18}, pages 1499--1506.

\bibitem[Mühlenbein and Paaß, 1996]{MuehlenbeinP96UMDA}
Mühlenbein, H. and Paaß, G. (1996).
\newblock From recombination of genes to the estimation of distributions {I.}
  {B}inary parameters.
\newblock In {\em Proc.~of {PPSN}~'96}, pages 178--187.

\bibitem[Pelikan et~al., 2015]{PelikanHL15SurveyOnEDAs}
Pelikan, M., Hauschild, M., and Lobo, F.~G. (2015).
\newblock Estimation of distribution algorithms.
\newblock In {\em Springer Handbook of Computational Intelligence}, pages
  899--928. Springer.

\bibitem[Sudholt and Witt, 2019]{SudholtW19cGAandACOonOneMax}
Sudholt, D. and Witt, C. (2019).
\newblock On the choice of the update strength in estimation-of-distribution
  algorithms and ant colony optimization.
\newblock {\em Algorithmica}, 81(4):1450--1489.

\bibitem[Witt, 2018]{Witt18DominoConvergence}
Witt, C. (2018).
\newblock Domino convergence: why one should hill-climb on linear functions.
\newblock In {\em Proc.~of GECCO~'18}, pages 1539--1546.

\bibitem[Witt, 2019]{Witt19}
Witt, C. (2019).
\newblock Upper bounds on the running time of the univariate marginal
  distribution algorithm on {OneMax}.
\newblock {\em Algorithmica}, 81(2):632--667.

\end{thebibliography}

\end{document}